\newtheorem{proposition}{Proposition}
\icmltitlerunning{Interpretable Off-Policy Evaluation by Highlighting Influential Transitions}
\begin{document}

\twocolumn[
\icmltitle{Interpretable Off-Policy Evaluation in Reinforcement Learning \texorpdfstring{\\}{} by Highlighting Influential Transitions}

\begin{icmlauthorlist}
\icmlauthor{Omer Gottesman}{Har}
\icmlauthor{Joseph Futoma}{Har}
\icmlauthor{Yao Liu}{Sta}
\icmlauthor{Sonali Parbhoo}{Har}
\icmlauthor{Leo Anthony Celi}{MIT}
\icmlauthor{Emma Brunskill}{Sta}
\icmlauthor{Finale Doshi-Velez}{Har}
\end{icmlauthorlist}

\icmlaffiliation{Har}{Harvard University}
\icmlaffiliation{MIT}{MIT}
\icmlaffiliation{Sta}{Stanford University}

\icmlcorrespondingauthor{Omer Gottesman}{gottesman@fas.harvard.edu}

\icmlkeywords{off-policy evaluation, interpretability}

\vskip 0.3in
]

\printAffiliationsAndNotice{}

\begin{abstract}
    Off-policy evaluation in reinforcement learning offers the chance of using observational data to improve future outcomes in domains such as healthcare and education, but safe deployment in high stakes settings requires ways of assessing its validity. Traditional measures such as confidence intervals may be insufficient due to noise, limited data and confounding. In this paper we develop a method that could serve as a hybrid human-AI system, to enable human experts to analyze the validity of policy evaluation estimates. This is accomplished by highlighting observations in the data whose removal will have a large effect on the OPE estimate, and formulating a set of rules for choosing which ones to present to domain experts for validation. We develop methods to compute exactly the influence functions for fitted Q-evaluation with two different function classes: kernel-based and linear least squares, as well as importance sampling methods. Experiments on medical simulations and real-world intensive care unit data demonstrate that our method can be used to identify limitations in the evaluation process and make evaluation more robust.
\end{abstract}

\section{Introduction}

Within reinforcement learning (RL), off-policy evaluation (OPE) is the task of estimating the value of a given evaluation policy, using data collected by interaction with the environment under a different behavior policy \citep{sutton2018reinforcement, precup2000eligibility}. OPE is particularly valuable when interaction and experimentation with the environment is expensive, risky, or unethical---for example, in healthcare or with self-driving cars. However, despite recent interest and progress, state-of-the-art OPE methods still often fail to differentiate between obviously good and obviously bad policies, e.g. in healthcare \citep{gottesman2018evaluating}.

Most of the OPE literature focuses on sub-problems such as improving asymptotic sample efficiency or bounding the error on OPE estimators for the value of a policy. However, while these bounds are theoretically sound, they are often too conservative to be useful in practice (though see e.g.~\citet{thomas2019} for an exception). This is not surprising, as there is a theoretical limit to the statistical information contained in a given dataset, no matter which estimation technique is used. Furthermore, many of the common assumptions underlying these theoretical guarantees are usually not met in practice: observational healthcare data, for example, often contains many unobserved confounders \citep{gottesman2019guidelines}.

Given the limitations of OPE, we argue that in high stakes scenarios domain experts should be integrated into the evaluation process in order to provide useful actionable results. For example, senior clinicians may be able to provide insights that reduce our uncertainty of our value estimates. In this light, the explicit integration of expert knowledge into the OPE pipeline is a natural way for researchers to receive feedback and continually update their policies until one can make a responsible decision about whether to pursue gathering prospective data.

The question is then what information can humans provide that might help assess and potentially improve our confidence in an OPE estimate? In this work, we consider how human input could improve our confidence in the recently proposed OPE estimator, fitted Q-evaluation (FQE) \cite{le2019batch}, as well as importance sampling (IS) methods.  We develop an efficient approach to identify the most influential transitions in a batch of observational data, that is, transitions whose removal would have large effects on the OPE estimate. By presenting these influential transitions to a domain expert and verifying that they are indeed representative of the data, we can increase our confidence that our estimated evaluation policy value is not dependent on outliers, confounded observations, or measurement errors. The main contributions of this work are:

\begin{itemize}
    \item \emph{Conceptual}: We develop a framework for using influence functions to interpret OPE, and discuss the types of questions which can be shared with domain experts to use their expertise in debugging OPE.
    \item \emph{Technical}:  We develop computationally efficient algorithms to compute the exact influence functions for several IS estimators as well as two broad function classes for FQE: kernel-based functions and linear functions.
    \item \emph{Empirical}: We demonstrate the potential benefits of influence analysis for interpreting OPE on a cancer simulator, and present results of analysis together with practicing clinicians of OPE for management of acute hypotension from a real intensive care unit (ICU) dataset.
\end{itemize}

\section{Related work}

The OPE problem in RL has been studied extensively.  Works fall into two main categories: importance sampling (e.g. \citet{precup2000eligibility, jiang2015doubly}) and model-based (often referred to as the direct method), which can be further subdivided into modeling the environment dynamics (e.g. \citet{hanna2017bootstrapping, gottesman2019combining}), and directly modeling the value function (e.g. \citet{le2019batch}). Some of these works provide bounds on the estimation errors (e.g. \citet{thomas2015high, dann2018policy}). We emphasize, however, that for most real-world applications these bounds are either too conservative to be useful or rely on assumptions which are usually violated.

While there has been considerable recent progress in interpretable machine learning and machine learning with humans in the loop (e.g. \citet{tamuz2011adaptively, lage2018human}), to our knowledge, there has been little work that considers human interaction in the context of OPE. \citet{oberst2019counterfactual} proposed framing the OPE problem as a structural causal model, which enabled them to identify trajectories where the predicted counterfactual trajectories under an evaluation policy differs substantially from the observed data collected under the behavior policy. However, that work does not give guidance on what part of the trajectory might require closer scrutiny, nor can it use human input for additional refinement. 

Finally, the notion of influence that we use throughout this work has a long history in statistics as a technique for evaluating the robustness of estimators \citep{cook1980characterizations}.  Recently, an approximate version of influence for complex black-box models was presented in \citet{koh2017understanding}, and they demonstrated how influence functions can make machine learning methods more interpretable. In the context of optimal control and RL, influence functions were first introduced by \citet{munos2002variable} to aid in online optimization of policies. However, their definition of influence as a change in the value function caused by perturbations of the reward at a specific state is quite different from ours. 

\section{Background}

\paragraph{Notation}

A Markov Decision Process (MDP) is a tuple $\langle \mathcal{X}, \mathcal{A}, P_T, P_R, P_0, \gamma \rangle$, where $\mathcal{X}$, $\mathcal{A}$ and $\gamma$ are the state space, action space, and the discount factor, respectively. The next state transition and reward distributions are given by $P_T(\cdot | x, a)$ and $P_R(\cdot | x, a)$ respectively, and $P_0(x)$ is the initial state distribution. The state and action spaces could be either discrete or continuous, and the transition and reward functions may be either stochastic or deterministic. 

A dataset is composed of a set of $N$ observed transitions $\mathcal{D} =  \{ (x^{(n)}, a^{(n)} ,r^{(n)} ,x'^{(n)}) \}_{n=1}^N$, and we use $\tau^{(n)}$ to denote a single transition. The subset $\mathcal{D}_0 \subseteq \mathcal{D}$ denotes initial transitions from which $P_0$ can be estimated. Note that although we treat all data points as observed transitions, in most practical applications data is collected in the form of trajectories rather than individual transitions.

A policy is a function $\pi : (\mathcal{X}, \mathcal{A}) \rightarrow \left[0, 1 \right]$ that gives the probability of taking each action at a given state $(\sum_{a \in \mathcal{A}} \pi(a|x) = 1)$.  The value of a policy is the expected return collected by following the policy, $v^\pi \coloneqq \mathrm{E} [ g_T | a_t \sim \pi]$, where expectations are taken with respect to the MDP and $g_T \coloneqq \sum_{t=0}^T \gamma^t r_t$ denotes the total trajectory return (sum of discounted rewards). The state-action value function $q^{\pi}(x, a)$ is the expected return for taking action $a$ at state $x$, and afterwards following $\pi$ in selecting future actions. The goal of off-policy evaluation is to estimate the value of an \emph{evaluation} policy, $\pi_e$, using data collected under a different \emph{behavior} policy, $\pi_b$. In this work, we are only interested in estimating $v^{\pi_e}$ and $q^{\pi_e}$, and will therefore drop the superscript for brevity. We will also limit ourselves to deterministic evaluation policies.

For the purpose of kernel-based value function approximation, we define a distance metric, $d((x^{(i)}, a^{(i)}),(x^{(j)}, a^{(j)}))$ over $\mathcal{X} \times \mathcal{A}$. In this work, for discrete action spaces, we will assume $d((x^{(i)}, a^{(i)}),(x^{(j)}, a^{(j)})) = \infty$ when $a^{(i)} \neq a^{(j)}$, but this is not required for any of the derivations.

\paragraph{Fitted Q-Evaluation}

Fitted Q-Evaluation \citep{le2019batch} models the q-function of $\pi_e$ and can be thought of as dynamic programming on an observational dataset to compute the value of a given evaluation policy. It is similar to the more well-known fitted Q-iteration method (FQI) \citep{ernst2005tree}, except it is performed offline on observational data, and the target is used for evaluation of a given policy rather than for optimization. FQE performs a sequence of supervised learning steps where the inputs are state-action pairs, and the targets at each iteration are given by $y_i(x, a) = r + \gamma \hat{q}_{i-1}(x', \pi_e(x'))$, where $\hat{q}_{i-1}(x, a)$ is the estimator (from a function class $\mathcal{F}$) that best estimates $y_{i-1}(x, a)$. For more information, see \citet{le2019batch}.

\paragraph{Importance sampling}

A popular class of OPE estimators consists of IS methods. These methods estimate the value of a policy by taking a sample average of trajectories returns, properly weighted to account for the difference between $\pi_b$ and $\pi_e$. The standard IS estimator is unbiased but has high variance, and there are many variants of this estimator which trade of bias and variance. For more information see \citep{precup2000eligibility, jiang2015doubly, thomas2016data}.

\section{OPE diagnostics using influence functions}

\subsection{Definition of the influence}

We aim to make OPE interpretable and easy to debug by identifying transitions in the data which are highly influential on the estimated policy value. We define the \emph{total influence} of transition $\tau^{(j)}$ as the change in the value estimate if $\tau^{(j)}$ was removed:
\begin{equation}
    I_j \equiv \hat{v}_{-j} - \hat{v},
\end{equation}
where $\hat{v}_{-j}$ is the value estimate using the same dataset after removal of $\tau^{(j)}$. In general, for any function of the data $f(\mathcal{D})$ we will use $f(\mathcal{D}_{-j}) \equiv f_{-j}$ to denote the value of $f$ computed for the dataset after removal of $\tau^{(j)}$.

Another quantity of interest is the change in the estimated value of $q(x^{(i)}, a^{(i)})$ as a result of removing $\tau^{(j)}$, which we call the \emph{individual influence}:
\begin{equation}
    I_{i, j} \equiv \hat{q}_{-j}(x^{(i)}, a^{(i)}) - \hat{q}(x^{(i)}, a^{(i)}).
\end{equation}
The total influence of $\tau^{(j)}$ can be computed by averaging its individual influences over the set $\mathcal{D}^*_0$ of all initial state-action transitions in which $a=\pi_e(x)$: 
\begin{equation}
    I_j = \frac{1}{|\mathcal{D}^*_0|} \sum_{i \in \mathcal{D}^*_0} I_{i, j}.
\end{equation}
As we are interested in the robustness of our evaluation, we can normalize the absolute value of the influence of $\tau^{(j)}$ by the estimated value of the policy to provide a more intuitive notion of overall importance:
\begin{equation}
    \tilde{I}_j \equiv \frac{|I_j|}{|\hat{v}|}.
\end{equation}

\subsection{Diagnosing OPE estimation}

With the above definitions of influence functions, we now formulate and discuss guidelines for diagnosing the OPE process for potential problems. 

\paragraph{No influential transitions: OPE appears reliable.}  
As a first diagnostic, we check that none of the transitions influence the OPE estimate by more than a specified influence threshold $\tilde{I}_C$, i.e. for all $j$ we have $\tilde{I}_j \leq \tilde{I}_C$. In such a case we would output that, to the extent that low influences suggests the OPE is stable, the evaluation appears reliable. That said, we emphasize that our proposed method for evaluating OPE methods is not exhaustive, and there could be many other ways in which OPE could fail.

\paragraph{Influential transitions: a human can help.}
When there are several influential transitions in the data (defined as transitions whose influence is larger than $\tilde{I}_C$), we present them to domain experts to determine whether they are representative, that is, taking action $a$ in state $x$ is likely to result in transition to $x'$.  If the domain experts can validate all influential transitions, we can still have some confidence in the validity of the OPE. If any influential transitions are flagged as unrepresentative or artefacts, we have several options: (1) Declare the OPE as unreliable; (2) Remove the suspect influential transitions from the data and recompute the OPE; (3) Caveat the OPE results as valid only for a subset of initial states that do not rely on that problematic transition.

In situations where there is a large number of influential transitions, manual review by experts may be infeasible. As such, it is necessary to present as few transitions as possible while still presenting enough to ensure that any potential artefacts in the data and/or the OPE process are accounted for. In practice, we find it is common to observe a sequence of influential transitions where removing any single transition has the same effect as removing the entire sequence. An example of this is shown schematically in Figure \ref{fig:influential_sequence}. An entire sequence marked in blue and red leads to a region of high reward, and so all transitions in that sequence will have high influence. The whole influential sequence appears very different from the rest of the data, and a domain expert might flag it as an outlier to be removed. However, we can present the expert with only the red transition and capture the influence of the blue transitions as well, reducing the number of suspect examples to be manually reviewed.

\paragraph{Influential transitions: policy is unevaluatable.}  When an influential transition, $\tau^{(j)}$, has no nearest neighbors to $(x'^{(j)}, \pi_e(x'^{(j)}))$, we can determine that the evaluation policy cannot be evaluated, even without review by a domain expert. This claim is a result of the fact that such a situation represents reliance of the OPE on transitions for which there is no overlap between the actions observed in the data and the evaluation policy. However, while the evaluation policy is not evaluatable, the influential ``dead-end'' transitions may still inform experts of what data is required for evaluation to be feasible.

It should be noted that the applicability of the diagnostics methods discussed above may change depending on whether the FQE function class is parametric or nonparametric. All function classes lend themselves to highlighting of highly influential transitions. However, the notion of stringing together sequences of neighbors, or looking for red flags in the form of influential transitions with no neighbors to their $(x', \pi_e(x'))$ state action pairs only makes sense for nonparametric models. In the case of parametric models, the notion of neighbors is less important as the influence of removing a transition manifests as a change to the learned parameters which affects the value estimates for the entire domain simultaneously. In contrast, for nonparametric methods, removing a transition locally changes the value of neighboring transitions and propagates through the entire domain through the sequential nature of the environment. While we derive efficient ways to compute the influence for both parametric and nonparametric function classes, in the empirical section of this paper we present results for nonparametric kernel-based estimators to demonstrate all diagnostics.

\begin{figure}[t]
\centering
\includegraphics[width=0.3\textwidth]{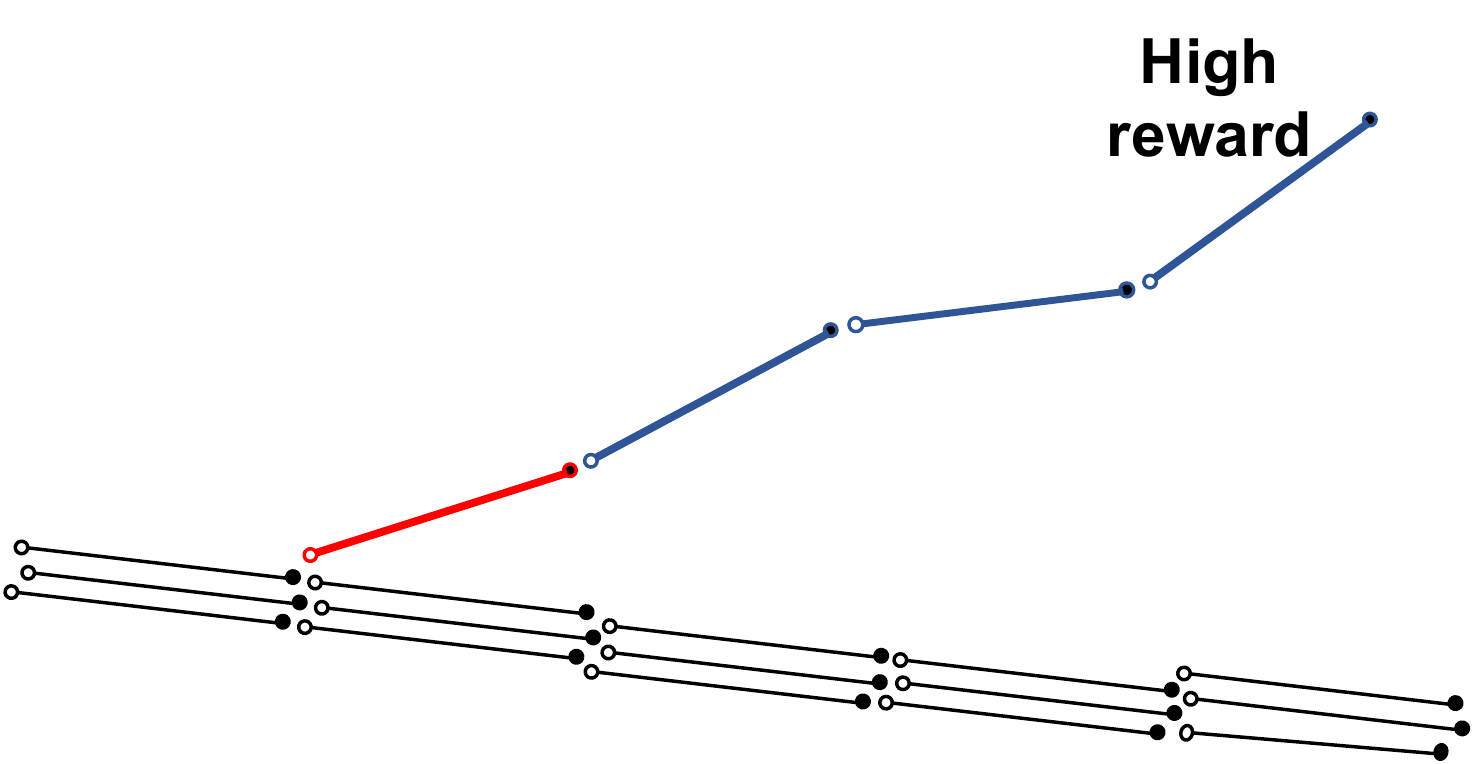}
\caption{\textbf{Schematic of an influential sequence.} All transitions in the sequence leading to a high reward have high influence, but flagging just the red transition for inspection will capture the influence of the blue ones as well.}\label{fig:influential_sequence}
\end{figure}

\subsection{Influence analysis for importance sampling}

The approach of using influence analysis to to asses the validity of the OPE can be naturally extended to IS methods, with a few small changes. Most IS methods use entire trajectories rather than individual transitions as their basic data input, and therefore for IS we would compute the influence of trajectories rather than transitions. This also implies that we cannot identify obvious unevaluateble datasets as described in the previous section. Last, it should be noted that for IS the influence is determined not only by the return of a trajectory, but is also strongly determined by the weights, which may grow exponentially with the horizon.

\section{Efficient computation of influence functions}
\label{sec:computation_of_influence}

A key technical challenge in performing the proposed influence analysis in OPE is computing the influences efficiently. The brute-force approach of removing a transition and recomputing the OPE estimate is clearly infeasible for all but tiny problems, as it requires refitting $N$ models. The computation of influences in RL is also significantly more challenging than in static one-step prediction tasks, as a change in the value of one state has a ripple effect on all other states that are possible to reach from it. We describe computationally efficient methods to compute the influence functions in two classes of FQE: kernel-based, and linear least squares, as well as several popular IS estimators. Unlike previous works (e.g. \citep{koh2017understanding}) that approximate the influence function for a broad class of black-box functions, we provide closed-form, analytic solutions for the exact influence function for a broad range of OPE methods.

\subsection{Kernel-Based FQE}
\label{sec:kernel_based_fqe}

In kernel based FQE, the function class we choose for estimating the value function of $\pi_e$ at a point in state-action space is based on similar observations within that space. For simplicity, in the main body of this work we estimate the value function as an average of all its neighbors within a ball of radius $R$, i.e.

\begin{align}
    \hat{q}(x, a) = \frac{1}{N_{(x, a)}} \sum_{i} \hat{q}(x^{(i)}, a^{(i)})
\end{align}

where the summation is performed over all $(x^{(i)}, a^{(i)})$ such that $d((x^{(i)}, a^{(i)}),(x, a)) < R$ and $N_{(x, a)}$ is the number of such points. Extension to general kernel functions is straightforward. We introduce a matrix formulation for performing FQE which allows for efficient computation of the influence functions.

\paragraph{Matrix formulation of nearest-neighbors based FQE.}

We define $\Delta_{i j}$ as the event that the starting state-action of $\tau^{(j)}$ is a neighbor of the starting state-action of $\tau^{(i)}$, i.e. $d((x^{(i)}, a^{(i)}), (x^{(j)}, a^{(j)})) < R$. Similarly, we define $\Delta_{i' j}$ as the event that the starting state-action of $\tau^{(j)}$ is a neighbor of the next-state and corresponding $\pi_e$ action of $\tau^{(i)}$, i.e. $d((x'^{(i)}, \pi(x'^{(i)})), (x^{(j)}, a^{(j)})) < R$. We also define the counts for numbers of neighbors of transitions as $N_i = \sum_{j=1}^N \mathbb{I} (\Delta_{i j})$ and $N_{i'} = \sum_{j=1}^N \mathbb{I} (\Delta_{i' j})$, where $\mathbb{I}(e)$ is the indicator function.

To perform nearest-neighbors FQE using matrix multiplications, we first construct two nearest-neighbors matrices: one for the neighbors of all state-action pairs, and one for the neighbors of all state-action pairs with pairs of next-states and subsequent actions under $\pi_e$. Formally:
\begin{equation}
    \mathbf{M}_{i j} = \frac{\mathbb{I} (\Delta_{i j})}{N_i}; \quad \mathbf{M}'_{i j} = \frac{\mathbb{I} (\Delta_{i' j})}{N_{i'}}.
\end{equation}

The $N \times N$ matrices $\mathbf{M}$ and $\mathbf{M}'$ can be easily computed from the data, and are used to compute the value function for all state-action pairs using the following proposition, the proof of which is given in Appendix \ref{appendix:proof_matrix_fqe}.

\begin{proposition}
\label{prop:matrix_fqe}
For all transitions in the dataset, the values for corresponding state-action pairs are given by
\begin{align}
    \mathbf{\hat{q}}'_t &= \left( \sum_{t'=1}^t \gamma^{t'-1} \mathbf{M}'^{t'} \right) \mathbf{r} \equiv \mathbf{\Phi}'_t \mathbf{r} \label{eq:q_prime_estimate_main} \\
    \mathbf{\hat{q}}_t &= \mathbf{M} \left( \sum_{t'=1}^t \left( \gamma \mathbf{M}' \right)^{t'-1} \right) \mathbf{r} \equiv \mathbf{\Phi}_t \mathbf{r}. \label{eq:q_estimate}
\end{align}
where $\hat{q}'_{t, i}$ and $\hat{q}_{t, i}$ are the estimated policy values at $(x'^{(i)}, \pi_e(x'^{(i)}))$ and $(x^{(i)}, a^{(i)})$, respectively, for $\tau^{(i)}$.
\end{proposition}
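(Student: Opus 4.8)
The plan is to recognize that the nearest-neighbor averaging estimator is \emph{linear} in its regression targets, and to exploit this to turn the iterative FQE procedure into a closed-form matrix geometric sum, which I would then verify by induction on the iteration index $t$. The conceptual content — that a closed form exists at all — rests entirely on this linearity, which is what lets both evaluation maps be written as fixed matrices $\mathbf{M}$ and $\mathbf{M}'$ that do not change across iterations.

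First I would set up the one-step matrix recursion. At iteration $t$ of FQE, the regression targets are $\mathbf{y}_t = \mathbf{r} + \gamma \mathbf{\hat{q}}'_{t-1}$, where the $i$-th entry of $\mathbf{\hat{q}}'_{t-1}$ is the previous iterate evaluated at the next-state-action pair $(x'^{(i)}, \pi_e(x'^{(i)}))$. Because the fitted value at any query point is just the average of the targets over that point's neighbors, evaluating the freshly fitted $\hat{q}_t$ at the \emph{start} state-action pairs gives $\mathbf{\hat{q}}_t = \mathbf{M}\mathbf{y}_t$, while evaluating the \emph{same} fitted function at the \emph{next} state-action pairs gives $\mathbf{\hat{q}}'_t = \mathbf{M}'\mathbf{y}_t$; the two matrices encode precisely the two neighborhood structures in the definitions of $\Delta_{ij}$ and $\Delta_{i'j}$. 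Substituting the target yields the coupled recursions $\mathbf{\hat{q}}'_t = \mathbf{M}'(\mathbf{r} + \gamma\mathbf{\hat{q}}'_{t-1})$ and $\mathbf{\hat{q}}_t = \mathbf{M}(\mathbf{r} + \gamma\mathbf{\hat{q}}'_{t-1})$, with the standard initialization $\mathbf{\hat{q}}'_0 = \mathbf{0}$.

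Next I would prove \eqref{eq:q_prime_estimate_main} by induction on $t$. The base case $t=1$ gives $\mathbf{\hat{q}}'_1 = \mathbf{M}'\mathbf{r}$, which agrees with $\mathbf{\Phi}'_1 = \gamma^0 \mathbf{M}'$. For the inductive step I assume $\mathbf{\hat{q}}'_{t-1} = \sum_{t'=1}^{t-1}\gamma^{t'-1}\mathbf{M}'^{t'}\mathbf{r}$, plug it into the recursion, distribute $\mathbf{M}'$ together with the factor $\gamma$, and reindex the resulting sum to recover $\sum_{t'=1}^{t}\gamma^{t'-1}\mathbf{M}'^{t'}\mathbf{r}$. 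Equation \eqref{eq:q_estimate} then follows immediately, without a separate induction, by substituting the just-proven expression for $\mathbf{\hat{q}}'_{t-1}$ into $\mathbf{\hat{q}}_t = \mathbf{M}(\mathbf{r}+\gamma\mathbf{\hat{q}}'_{t-1})$, absorbing the leading $\mathbf{r}$ term as the $t'=1$ term of the sum, and shifting the summation index so that the bracketed operator becomes $\sum_{t'=1}^{t}(\gamma\mathbf{M}')^{t'-1}$.

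I expect the only real subtlety to be bookkeeping rather than anything deep: keeping straight that the target at iteration $t$ uses the \emph{previous} iterate's next-state values (so $\mathbf{M}'$ appears one extra time relative to the single leading $\mathbf{M}$), pinning down the initialization convention that fixes the base case, and performing the index shift in the geometric-type sum carefully so that the powers of $\gamma$ and of $\mathbf{M}'$ line up exactly as claimed in the two displayed equations.
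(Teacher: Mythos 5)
Your proposal is correct and follows essentially the same route as the paper's own proof: both express the nearest-neighbor averaging as the fixed linear maps $\mathbf{M}$ and $\mathbf{M}'$, establish the coupled recursions $\mathbf{\hat{q}}'_t = \mathbf{M}'(\mathbf{r} + \gamma\mathbf{\hat{q}}'_{t-1})$ and $\mathbf{\hat{q}}_t = \mathbf{M}(\mathbf{r} + \gamma\mathbf{\hat{q}}'_{t-1})$, prove the primed formula by induction on $t$ with base case $\mathbf{\hat{q}}'_1 = \mathbf{M}'\mathbf{r}$, and obtain the unprimed formula by direct substitution. Your explicit statement of the initialization $\mathbf{\hat{q}}'_0 = \mathbf{0}$ is a minor clarification the paper leaves implicit, but it does not change the argument.
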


In future derivations, we will drop the time dependence of $\mathbf{\Phi}$ and $\hat{\mathbf{q}}$ on $t$. This is justified when there are well defined ends of trajectories with no nearest neighbors (or equivalently, trajectories end in an absorbing state), and the number of iterations in the FQE is larger than the longest trajectory.

\paragraph{Influence function computation.}

Removal of a transition $\tau^{(j)}$ from the dataset can affect $\hat{q}_i$ in two ways. First, $\hat{q}_i$ is a mean over all of its neighbors, indexed by $k$, of $r^{(k)} + \gamma \hat{q}'_k$. Thus if $(x^{(j)}, a^{(j)})$ is one of the $M^{-1}_{ij}$ neighbors of $(x^{(i)}, a^{(j)})$, removing it from the dataset will change the value of $\hat{q}_i$ by $\frac{\hat{q}_{i} - \left( r^{(j)} + \gamma \hat{q}'_{j} \right)}{M_{ij}^{-1} - 1}$. The special case of $M^{-1}_{ij} = 1$ does not pose a problem in the denominator, as given that $i \neq j$ and every transition is a neighbor of itself, if $(x^{(j)}, a^{(j)})$ is a neighbor of $(x^{(i)}, a^{(i)})$, then $M^{-1}_{ij} \geq 2$.

The second way in which removing $\tau^{(j)}$ influences $\hat{q}_i$ is through its effect on intermediary transitions. Removal of $\tau^{(j)}$ changes the estimated value of $\hat{q}'_k$, of all $(x'^{(k)}, \pi_e(x'^{(k)}))$ that $(x^{(j)}, a^{(j)})$ is a neighbor of by $\frac{\hat{q}'_{k} - \left( r^{(j)} + \gamma \hat{q}'_{j} \right)}{M'^{-1}_{kj} - 1}$. Multiplying this difference by $\gamma$ yields the difference in $\hat{q}_{k}$ due to removal of $\tau^{(j)}$. A change in the value of $\hat{q}_{k}$ is identical in its effect on the value estimation to changing $r^{(k)}$, a change which is mediated to $\hat{q}_i$ through $\Phi_{ik}$. In the special case that $(x^{(j)}, a^{(j)})$ is the only neighbor of $(x'^{(k)}, \pi_e(x'^{(k)}))$, the value estimate $\hat{q}'_k$ changes from $\hat{q}_j$ to zero.

Combining the two ways in which removal of $\tau^{(j)}$ changes the estimated value $\hat{q}_i$ yields the individual influence:
\begin{align}
    I_{i, j} &= \mathbb{I}(\Delta_{ij}) \frac{\hat{q}_{i} - \left( r^{(j)} + \gamma \hat{q}'_{j} \right)}{M_{ij}^{-1} - 1} + \sum_{k:\Delta_{k'j}} I_{(i, j)}^{(k)},
\end{align}
where we define
\begin{align}
    I_{i, j}^{(k)} =
    \begin{cases} 
      \gamma \Phi_{ik} \frac{\hat{q}'_{k} - \left( r^{(j)} + \gamma \hat{q}'_j \right)}{M'^{-1}_{kj} - 1} & M'^{-1}_{kj} > 1 \\
      \gamma \Phi_{ik} \hat{q}_j & M'^{-1}_{kj} = 1.
   \end{cases}
\end{align}

\paragraph{Computational complexity.} The matrix formulation of kernel based FQE allows us to compute an individual influence in constant time, making influence analysis of the entire dataset possible in $\mathcal{O}(N |\mathcal{D}^*_0|)$ time. Furthermore, the sparsity of $\mathbf{M}$ and $\mathbf{M}'$ allows the FQE itself to be done in $\mathcal{O}(N^2 T)$. See Appendix \ref{appendix:kernel_fqe_complexity} for a full discussion.

\subsection{Linear Least Squares FQE}

\newcommand{\feature}{\pmb{\psi}}
\newcommand{\Feature}{\mathbf{\Psi}}
In linear least squares FQE, the policy value function $\hat{q}(x,a)$ is approximated by a linear function $\hat{q}(x,a) = \feature(x,a)^\top \mathbf{w}$ where $\feature(x,a)$ is a $D$-dimensional feature vector for a state-action pair. Let $\Feature \in \mathbb{R}^{N \times D}$ be the sample matrix of $\feature(x,a)$. Define vector $\feature_{\pi}(x) = \gamma \feature(x, \pi_e(x))$ and let $\Feature_p \in \mathbb{R}^{N \times D}$ be the sample matrix of $\feature_{\pi}(x')$. The least-squares solution of $\mathbf{w}$ is 
    $(\Feature^\top \Feature - \gamma \Feature^\top \Feature_p)^{-1}\Feature^\top \mathbf{r}$
(See Appendix \ref{prop:ls_fqe} for full derivation).

Let $\mathbf{w}_{-j}$ be the solution of linear least squares FQE after removing $\tau^{(j)}$, and $\Feature_{-j}$, $\mathbf{r}_{-j}$, and $\Feature_{p,-j}$ be the corresponding matrices and vectors without the $\tau^{(j)}$. Then, $\mathbf{w}_{-j} = (\Feature_{-j}^\top \Feature_{-j} - \gamma \Feature_{-j}^\top \Feature_{p,-j})^{-1}\Feature_{-j}^\top \mathbf{r}_{-j}$. The key challenge of computing the influence function is computing $\mathbf{w}_{-j}$ in an efficient manner that avoids recomputing a costly matrix inverse for each $j$. 
Let $\mathbf{C}_{-j} = (\Feature_{-j}^\top \Feature_{-j} - \gamma \Feature_{-j}^\top \Feature_{p,-j})$ and $\mathbf{C} = (\Feature^\top \Feature - \gamma \Feature^\top \Feature_p)$. We compute $\mathbf{w}_{-j}$ as follows:
\begin{align}
    \mathbf{B}_{j} &\leftarrow \mathbf{C}^{-1} + \frac{\mathbf{C}^{-1} \feature_j \feature_j^\top \mathbf{C}^{-1}}{1- \feature_{j}^{\top}\mathbf{C}^{-1}\feature_{j}}\\
    \left(\mathbf{C}_{-j}\right)^{-1} &\leftarrow \mathbf{B}_{j} - \frac{\gamma \mathbf{B}_{j} \feature_j \feature_{\pi,j}^\top \mathbf{B}_{j}}{1+ \gamma \feature_{\pi,j}^{\top}\mathbf{B}_{j}\feature_{j}} \\
    \mathbf{w}_{-j} &\leftarrow \left(\mathbf{C}_{-j}\right)^{-1}\left( \Feature^\top \mathbf{r} - r^{(j)} \feature_j \right) 
\end{align}
The proof of correctness is in Proposition \ref{prop:ls_fqe_influence} in Appendix \ref{appendix:ls_fqe}. The individual influence function is then simply:
\begin{align}
    I_{i,j} = \feature(s^{(i)},a^{(i)})^\top (\mathbf{w}_{-j} - \mathbf{w}).
\end{align}

\paragraph{Computational complexity.} The bottleneck of computing $\mathbf{w}_{-j}$ is the matrix multiplication of $D \times D$ matrices which takes at most $\mathcal{O}(D^{3})$. All the other matrix multiplications involving size $N$, e.g. $\Feature^\top \mathbf{r}$, do not depend on $j$ and could be cached from the original OPE. Thus, the overall complexity for computing $I_{i,j}$ for all $i$ and $j$ is $\mathcal{O}(ND^{3})$. Assuming $N>D$, the complexity of the original OPE algorithm is $\mathcal{O}(ND^2)$, where the bottleneck is computing $\Feature^\top \Feature$.

\subsection{Importance Sampling}

IS methods are essentially weighted averages over returns of trajectories, and therefore computing the total influence of a trajectory in a dataset can easily be performed in constant time, as long as certain values a cached. For example, the influence of the $j^{th}$ trajectory for standard IS is

\begin{equation}
    I_j = \frac{1}{N-1} \left( \hat{v} - w_{0:T}^{(j)} g_T^{(j)} \right),
\end{equation}

where $N$ is the number of trajectories, and $w_{0:T}^{(j)}$ and $g_T^{(j)}$ are the IS weight and return of the $j^{th}$ trajectory, respectively. In Appendix \ref{appendix:is_computation} we present the derivation of the influence for IS, WIS, PDIS, DR and WDR estimators.

\section{Illustration of influence functions in a sequential setting}
\label{sec:intuition}

We now demonstrate and give intuition for how the influence behaves in an RL setting. For the demonstrations and experiments presented throughout the rest of the paper we use the kernel-based FQE method.

Several factors determine the influence of a transition. For transitions to be influential they must have actions which are possible under the evaluation policy and form links in sequences which result in returns different than the expected value. Furthermore, transitions will be more influential the less neighbors they have.

To demonstrate this intuition we present in Figure \ref{fig:intuition} trajectories from a 2D continuous navigational domain \footnote{Code for reproducing the results in this paper can be found at https://github.com/dtak/interpretable\_ope\_public.git}. The agent starts at the origin and takes noisy steps of length $1$ at $45^{\circ}$ to the axes. The reward for a given transition is a function of the state and has the shape of a Gaussian centered along the approximate path of the agent, represented as the background heat map in Figure \ref{fig:intuition} (top), where observed transitions are drawn as black line segments. Because distances for the FQE are computed in the state-action space, in this example all actions in the data are the same to allow for distances to be visualized in 2D.

To illustrate how influence is larger for transitions with few neighbors, we removed most of the transitions in two regions (denoted II and III), and compared the distribution of  influences in these regions with influences in a data dense region (denoted I). Figure \ref{fig:intuition} (bottom)  shows the distribution over 200 experiments (in each experiment, new data is generated) of the influences of transitions in the different regions. The influence is much higher for transitions in sparse regions with few neighbors, as can be seen by comparing the distributions in regions I and II.  This is a desired property, as in analysis of the OPE process, we'd like to be able to present domain experts with transitions that have few neighbors where the sampling variance of a particular transition could have large effect on evaluation.

In region III, despite the fact that the observations examined also have very few neighbors, their influence is extremely low, as they don't lead to any regions where rewards are gained by the agent.

\begin{figure}[t]
\centering
\subfigure{\includegraphics[width=0.35\textwidth]{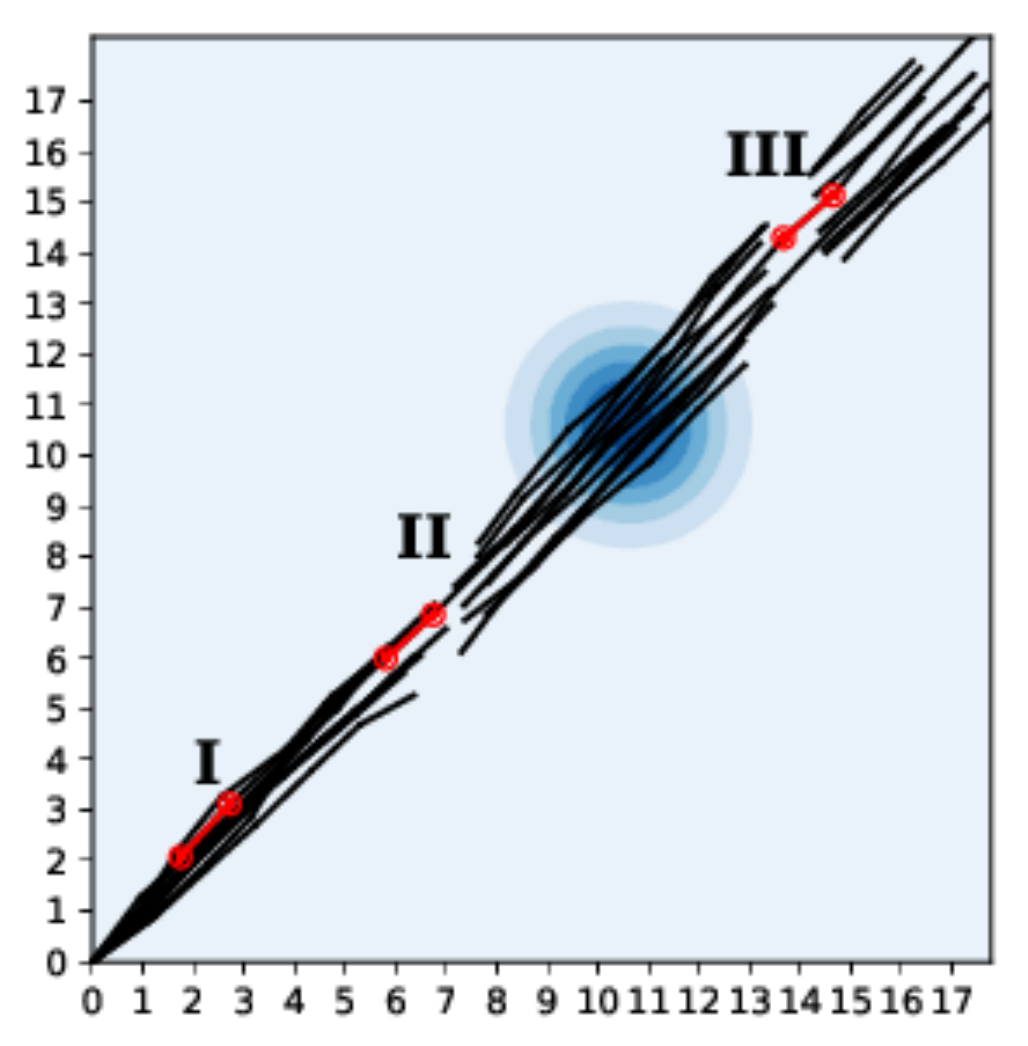}}
\subfigure{\includegraphics[width=0.35\textwidth]{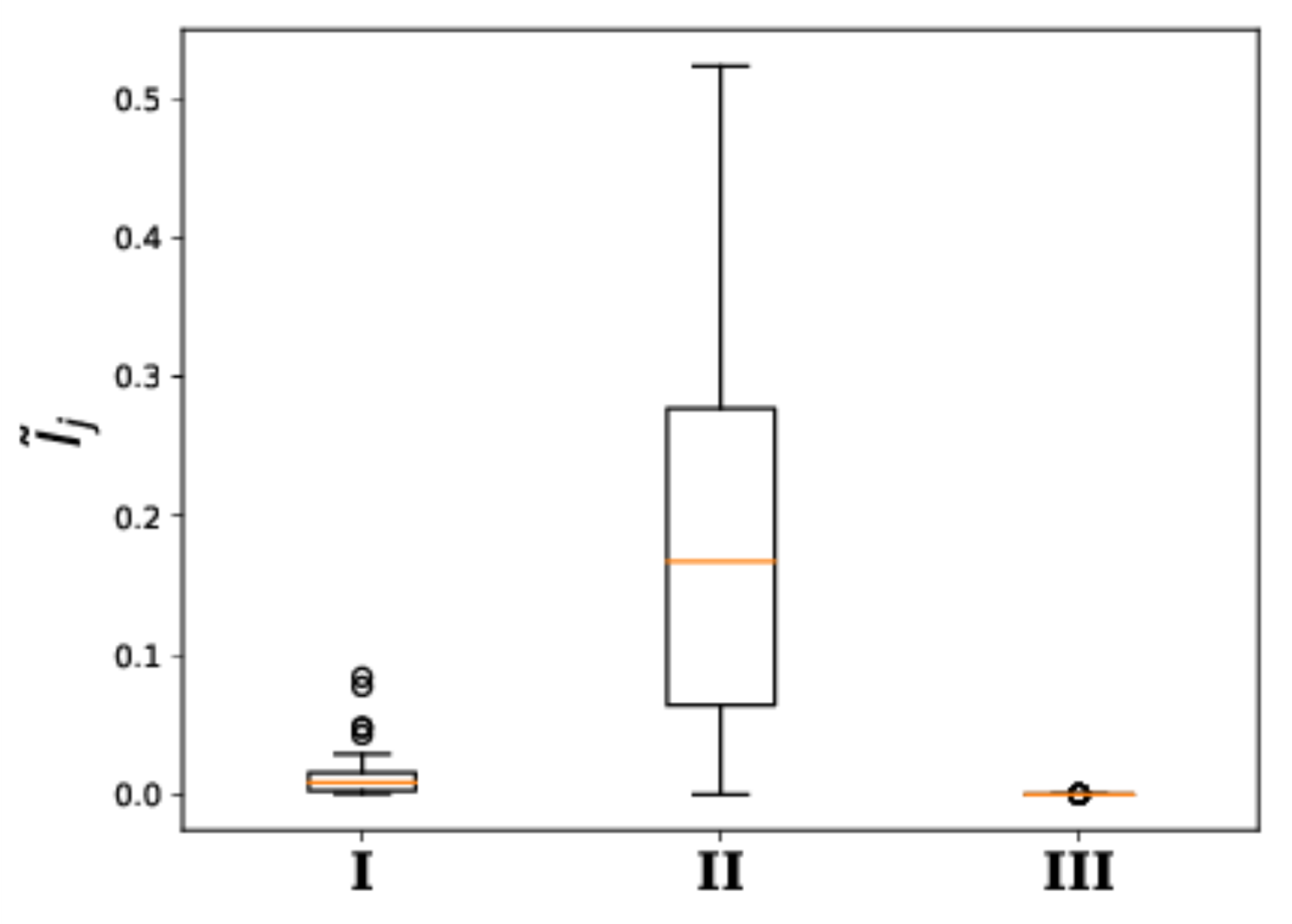}}
\caption{\textbf{Conceptual demonstration on a 2D domain.} For transitions in the data to have high influence, they must agree with the evaluation policy and lead to rewarding regions in the state-action space. Additionally, the influence of transitions decreases with the number of their close neighbors.}
\label{fig:intuition}
\end{figure}

\section{Experiments}

\subsection{Medical cancer simulator}
\label{sec:exp_cancer}

\begin{figure}[t]
    \centering
    \subfigure[No influential transitions]{\includegraphics[width=0.4\textwidth]{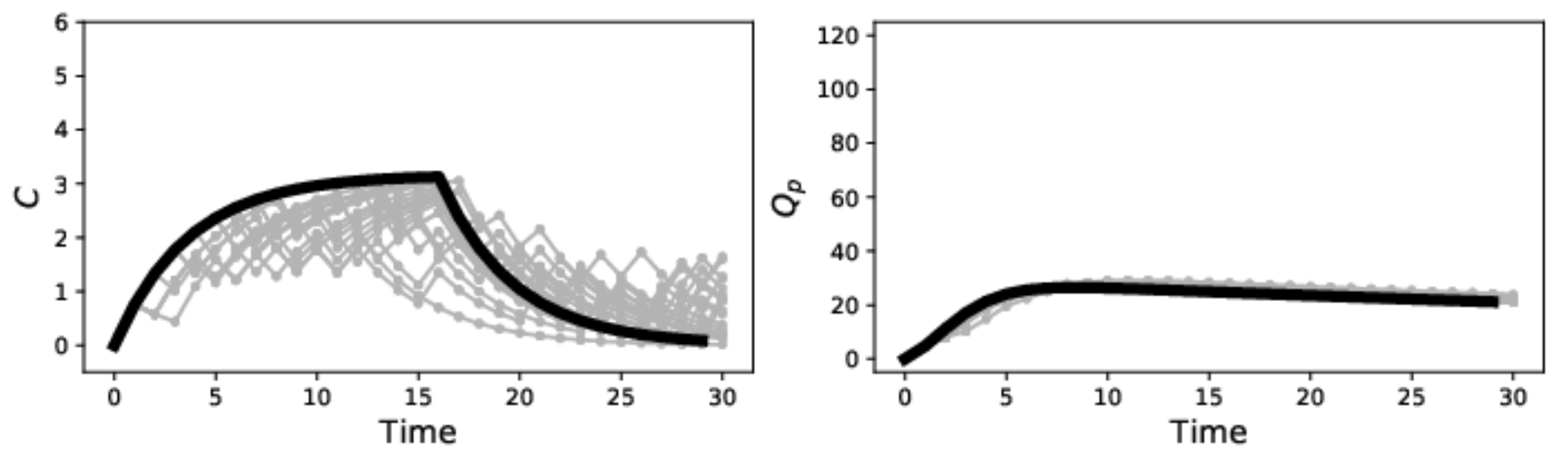}}
    \subfigure[Dead end sequence]{\includegraphics[width=0.4\textwidth]{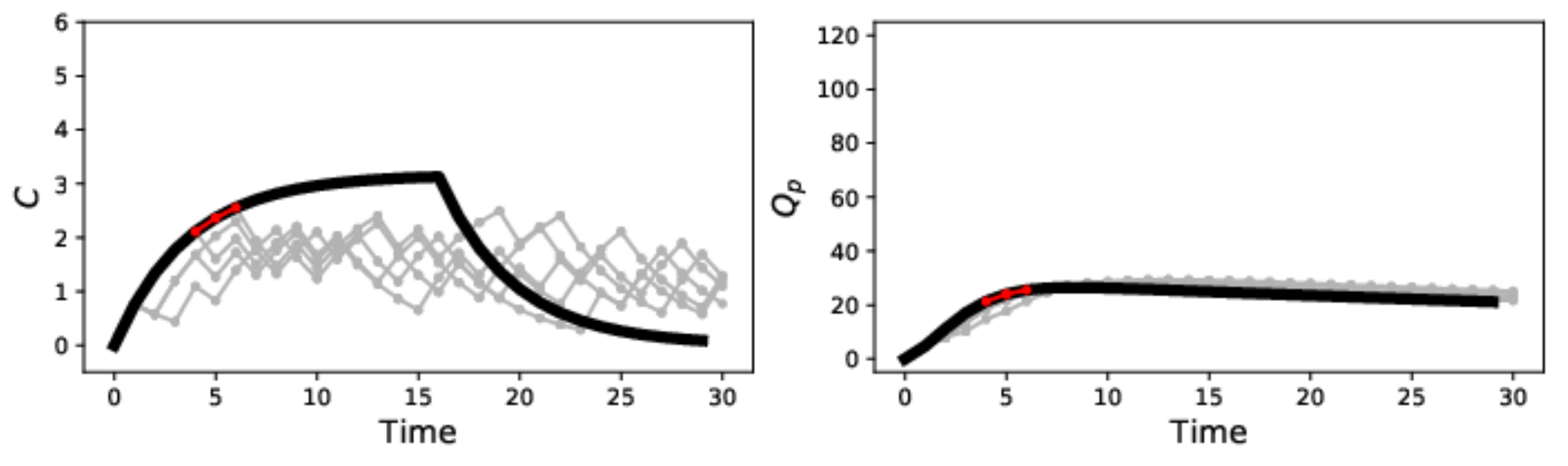}}
    \subfigure[Highlighted reliable transitions]{\includegraphics[width=0.4\textwidth]{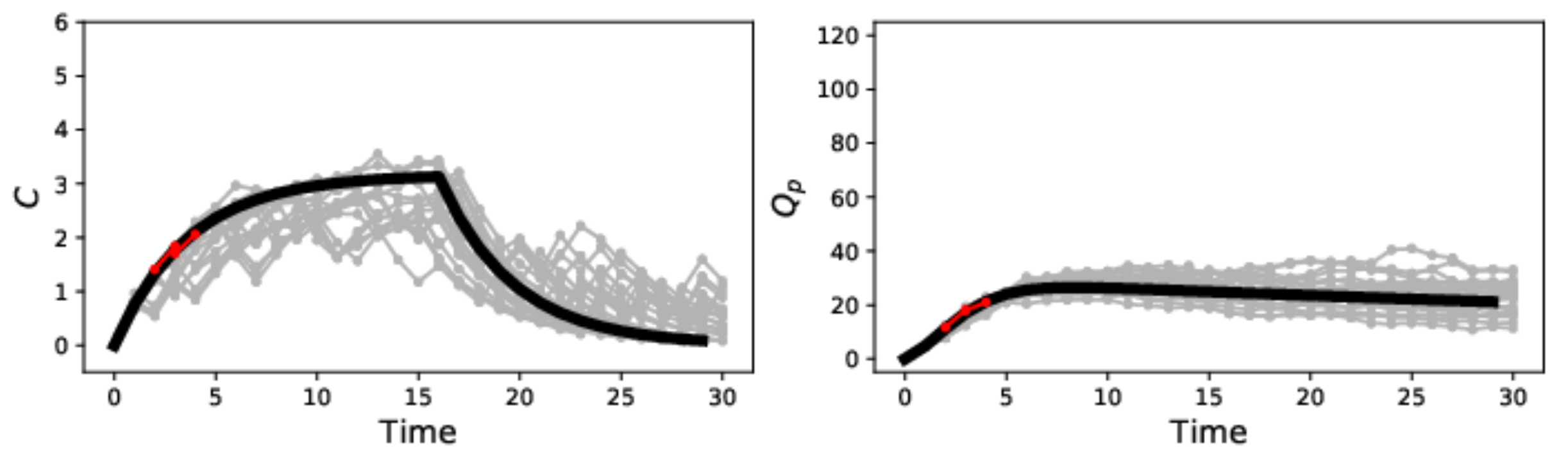}}
    \subfigure[Highlighted problematic transitions]{\includegraphics[width=0.4\textwidth]{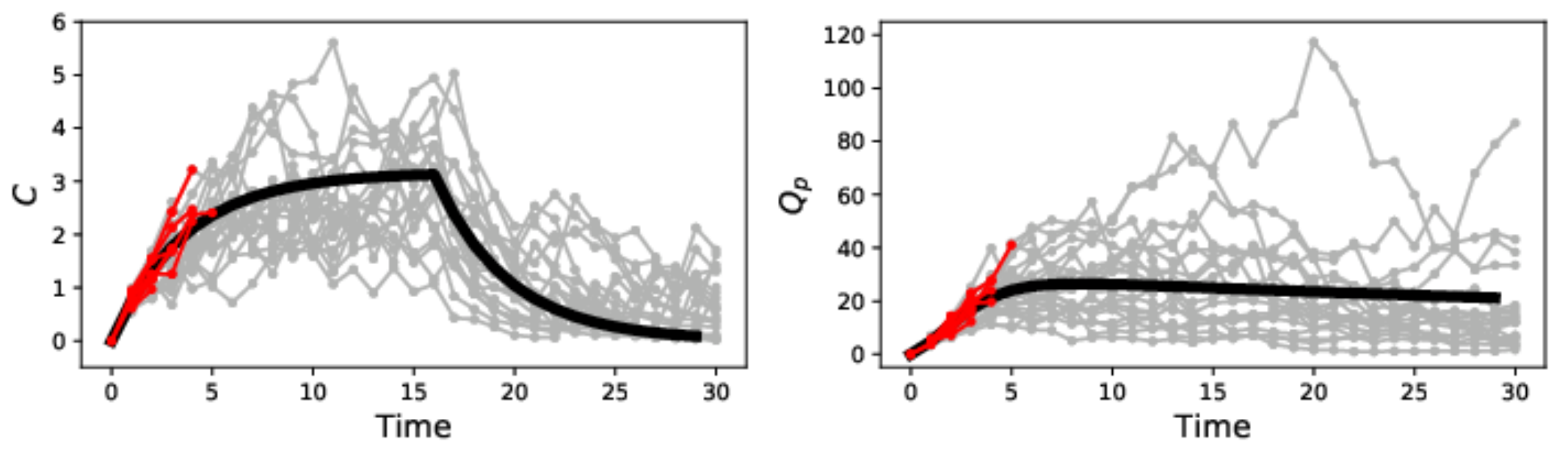}}
    \caption{\textbf{Influence analysis for simulated cancer data.} Analysis of synthetic cancer simulations demonstrates how influence analysis can differentiate between different diagnostics of the OPE process.}
    \label{fig:cancer}
\end{figure}

\begin{figure}[t]
    \centering
    \subfigure[Influence Distribution]{\includegraphics[width=0.35\textwidth]{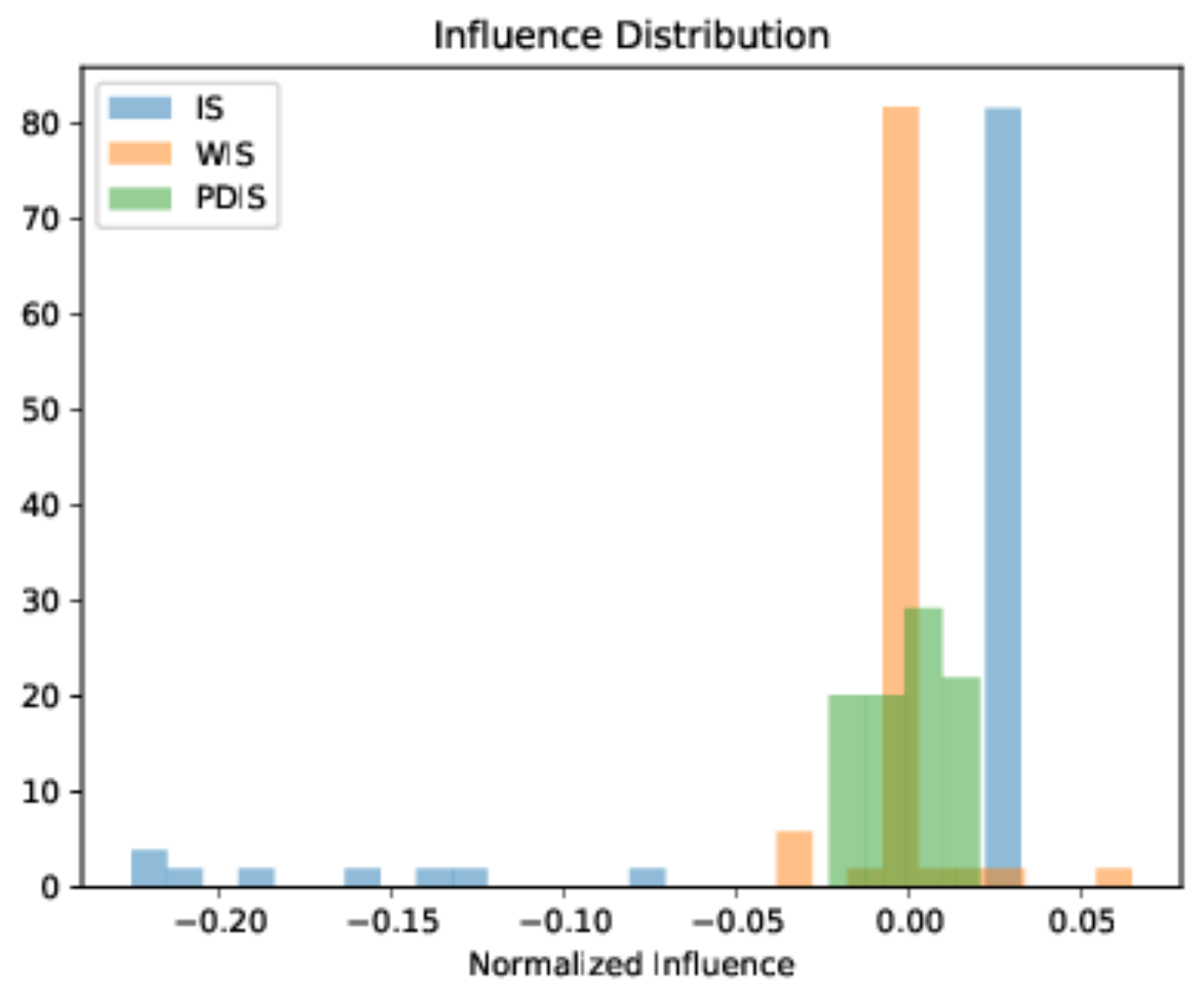}}
    \subfigure[IS]{\includegraphics[width=0.4\textwidth]{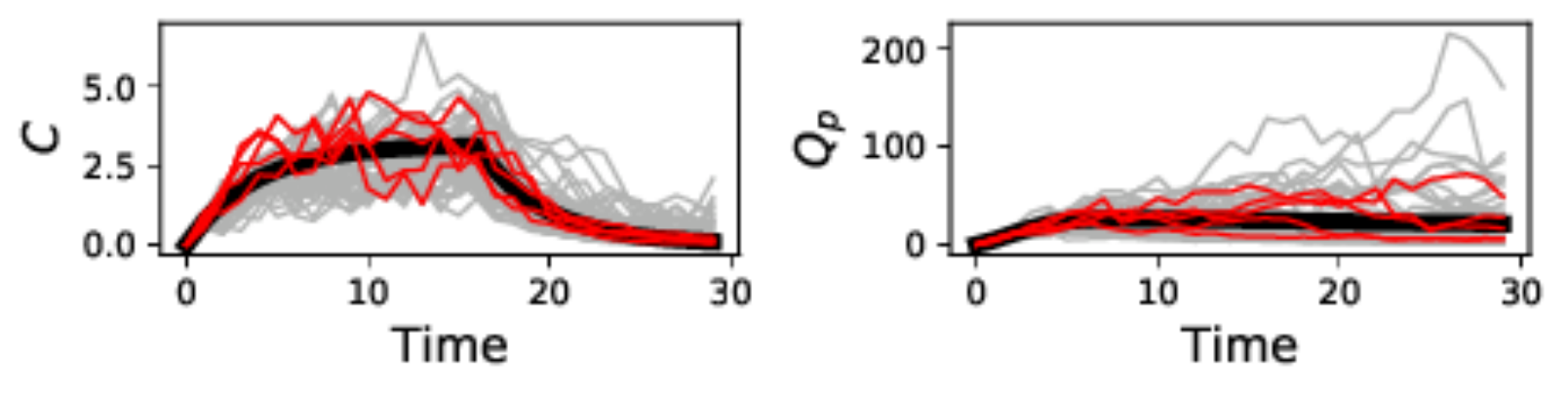}}
    \subfigure[WIS]{\includegraphics[width=0.4\textwidth]{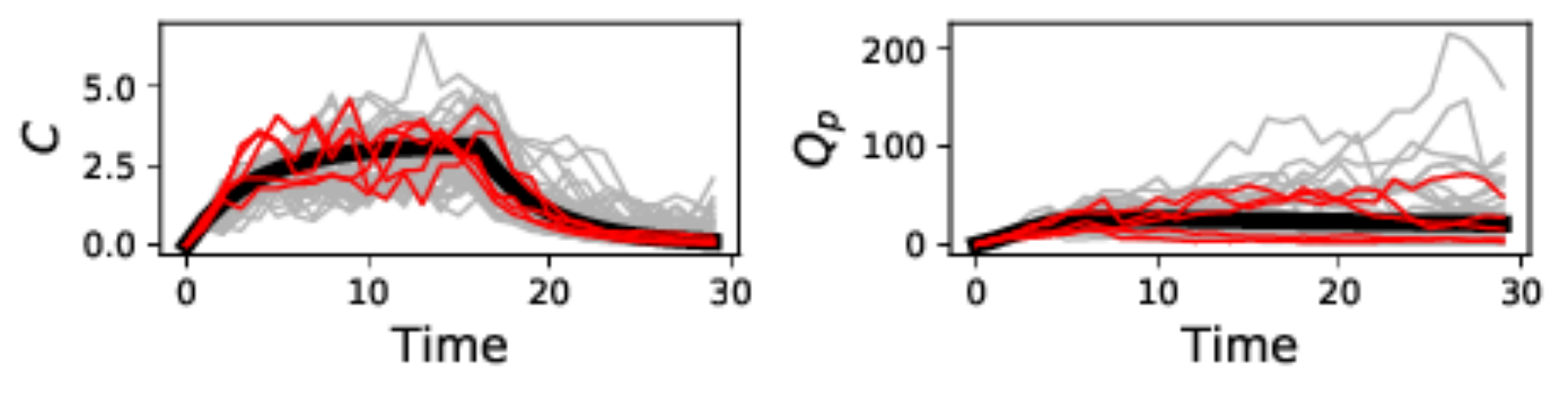}}
    \subfigure[PDIS]{\includegraphics[width=0.4\textwidth]{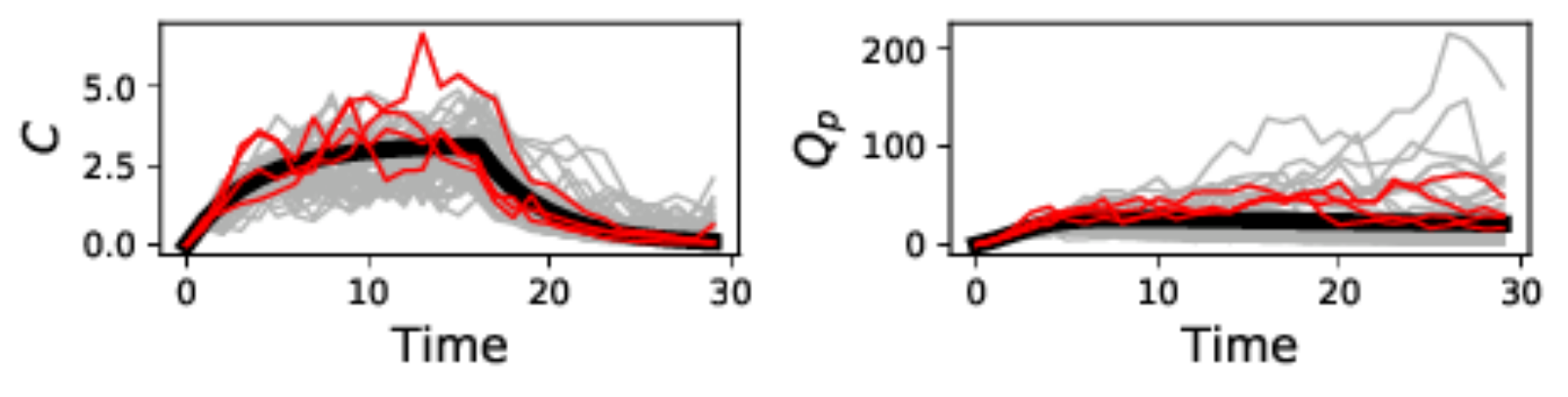}}
    \caption{\textbf{IS influence analysis for simulated cancer data.} For the same dataset, different estimators have different influence distributions, and for each estimator different trajectories have high influence.}
    \label{fig:is}
\end{figure}

To demonstrate the different ways in which influence analysis can allow domain experts to either increase our confidence in the validity of OPE or identify instances where they are invalid, we first present results on a simulator of cancer dynamics. The 4 dimensional states of the simulator approximate the dynamics of tumor growth, with actions consisting administration of chemotherapy at each timestep representing one month. See \citet{ribba2012tumor} for details.

In Figure \ref{fig:cancer} we present four cases in which we attempt to evaluate the policy of treating a patient for 15 months and then discontinuing chemotherapy until the end of treatment at 30 months. Each subplot in Figure \ref{fig:cancer} shows two of the four state variables as a function of time, under different conditions which might make evaluation more difficult, such as difference in behavior policy or stochasticity in the environment. The heavy black line represents the expectation of each state dimension at each time-step under the evaluation policy, while the grey lines represent observed transitions under the behavior policy which is $\epsilon$-greedy with respect to the evaluation policy. In all figures, we highlight in red all influential transitions our method would have highlighted for review by domain experts $(\tilde{I}_c = 0.05)$.

\paragraph{Case 1: OPE seems reliable.}
Figure \ref{fig:cancer}(a) represents a typical example where the OPE can easily be trusted. Despite the large difference between the evaluation and behavior policy $(\epsilon = 0.3)$, enough trajectories have been observed in the data to allow for proper evaluation, and no transition is flagged as being too influential. The value estimation error in this example is less than $1\%$ and our method correctly labels this dataset as reliable.

\paragraph{Case 2: Unevaluatable.}
Figure \ref{fig:cancer}(b) is similar in experimental conditions to (a) ($\epsilon = 0.3$ and deterministic transitions), but with less collected data, so that the observations needed to properly estimate the dynamics are not in the data. This can be seen by the lack of overlap between the observed transitions and the expected trajectory, and results in a $38\%$ value estimation error. In real life we will not know what the expected trajectory under the evaluation policy looks like, and therefore will not be able to make the comparison and detect the lack of overlap between transitions under the evaluation and behavior policies. However, our method highlights a very influential sequence which terminates at a dead-end, and thus will correctly flag this dataset as not sufficient for evaluation. Our method in this case is confident enough to dismiss the results of evaluation without need for domain experts, but can still inform experts on what type of data is lacking in order for evaluation to be feasible.

\paragraph{Case 3: Humans might help.}
In Figures \ref{fig:cancer}(c-d), $\epsilon = 0.3$, but the dynamics have different levels of stochasticity. The less stochastic dynamics in \ref{fig:cancer}(c) allow for relatively accurate evaluation ($8\%$ error) but our method identifies several influential transitions which must be presented to a domain expert. These transitions lie on the expected trajectory, and thus a clinician would verify that they represent a typical response of a patient to treatment. This is an example in which our method would allow a domain expert to verify the validity of the evaluation by examining the flagged influential transitions.

Conversely, in \ref{fig:cancer}(d) some extreme outliers lead to a large estimation error ($23\%$ error). The influential transitions identified by our method are exactly those which start close to the expected trajectory but deviate significantly from the expected dynamics. A domain expert presented with the these transitions would easily be able to note that the OPE heavily relies on atypical patients and rightly dismiss the validity of evaluation.

To summarize, we demonstrated that analysis of influences can both validate or invalidate the evaluation without need for domain experts, and in intermediate cases present domain experts with the correct queries required to gain confidence in the evaluation results or dismiss them.

\paragraph{Influence analysis for IS - Influence is a method specific quantity.}

In Figure \ref{fig:is} we present influence analysis results for the cancer environment, with different importance sampling methods. Unlike the FQE experiment where we performed influence analysis of the same estimator for different datasets, here we analyze the same dataset for three different OPE estimaors - IS, WIS and PDIS. In Figure \ref{fig:is} (a) we plot the distribution of the influence of all trajectories in the data, and see that the distributions are qualitatively different for each estimator. Furthermore, in \ref{fig:is} (b-d) we highlight the 5 most influential trajectories for each estimator, and see that they are different for each estimator. The key point we wish to highlight is that influence analysis identifies features of the interaction between a dataset and an estimator, and not of the data alone. This makes sense, as different OPE methods are robust or sensitive to different types of noise or artefacts in the data.

\subsection{Analysis of real ICU data - MIMIC III}

\begin{figure*}[t]
\centering
\includegraphics[width=0.43\textwidth]{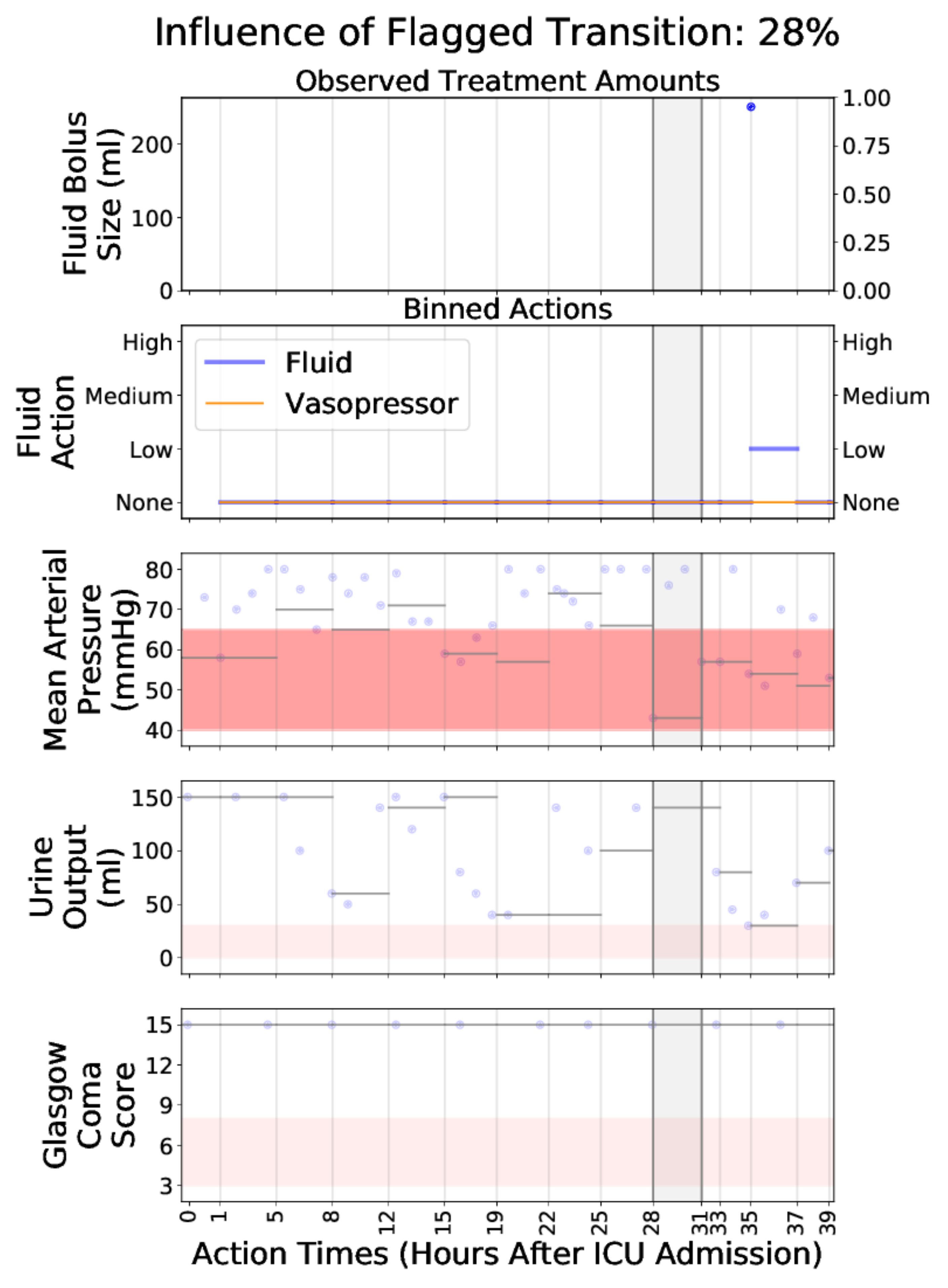}
\includegraphics[width=0.43\textwidth]{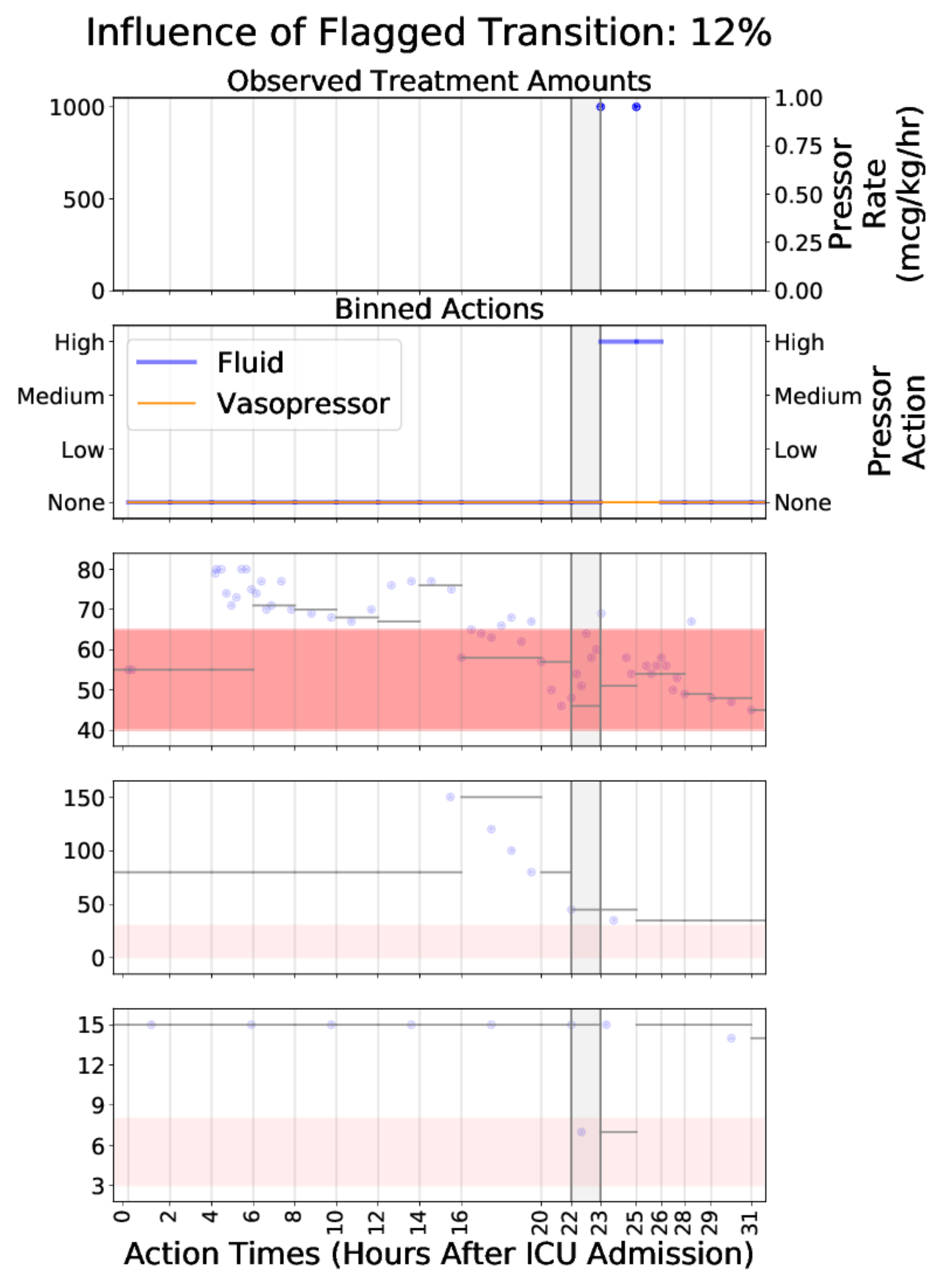}
\caption{Influence analysis on our real-world dataset discovered six transitions in the evaluation dataset that were especially influential on our OPE. We display two of them in this figure, see Appendix \ref{sec:additional-mimic-results} for the remaining four.} 
\label{fig:mimic}
\end{figure*}

To show how influence analysis can help debug OPE for a challenging healthcare task, we consider the management of acutely hypotensive patients in the ICU. Hypotension is associated with high morbidity and mortality \citep{jones2006emergency}, but management of these patients is not standardized as ICU patients are heterogeneous. Within critical care, there is scant high-quality evidence from randomized controlled trials to inform treatment guidelines \citep{de2018unexplained,girbes2019time}, which provides an opportunity for RL to help learn better treatment strategies. In collaboration with an intensivist, we use influence analysis to identify potential artefacts when performing OPE on a clinical dataset of acutely hypotensive patients. 

\paragraph{Data and evaluation policy.}
Our data source is a subset of the publicly available MIMIC-III dataset \citep{johnson2016mimic}. See Appendix \ref{appendix:mimic_details} for full details of the data preprocessing. Our final dataset consists of 346 patient trajectories (6777 transitions) for learning a policy and another 346 trajectories (6863 transitions) for evaluation of the policy via OPE and influence analysis. 

Our state space consists of 29 relevant clinical variables, summarizing current physiological condition and past actions. The two main treatments for hypotension are administration of an intravenous (IV) fluid bolus or initiation of vasopressors. We bin doses of each treatment into 4 categories for "none", "low", "medium" and "high", so that the full action space consists of 16 discrete actions. Each reward is a function of the next blood pressure (MAP) and takes values in $[-1,0]$. As an evaluation policy, we use the most common action of a state's 50 nearest neighbors. This is setup is equivalent to constructing a decision assistance tool for clinicians by recommending the common practice action for patients, and using OPE combined with influence analysis to estimate the efficacy of such a tool. See Appendix \ref{appendix:mimic_details} for more details on how we setup the RL problem formulation, and for the kernel function used to compute nearest-neighbors. 

\paragraph{Presenting queries to a practicing intensivist.}
Running influence analysis flags 6 influential $(\tilde{I}_C = 0.05)$. We show 2 of these transitions in Figure \ref{fig:mimic} and the rest in Appendix \ref{sec:additional-mimic-results}. While this analysis highlights individual transitions, our results figures display additional context before and after the suspect transition to help the clinician understand what might be going on.

In Figure \ref{fig:mimic}, each column shows a transition flagged by influence analysis. The top two rows show actions taken (actual treatments in the top row and binned actions in the second row). The remaining three rows show the most important state variables that inform the clinicians' decisions: blood pressure (MAP), urine output, and level of consciousness (GCS). For these three variables, the abnormal range is shaded in red, where the blood pressure shading is darker highlighting its direct relationship with the reward. Vertical grey lines represent timesteps, and the highlighted influential transition is shaded in grey.

\paragraph{Outcome: Identifying and removing an influential, buggy measurement.}
The two transitions in Figure \ref{fig:mimic} highlight potential problems in the dataset that have a large influence. In the first transition (left), a large drop in blood pressure is observed at the starting time of this transition, potentially indicating a dangerous hypotensive state. Suprisingly, the patient received no treatment, and this unusual transition has a 29\% influence on the OPE estimate. Given additional context just before and after the transition, showing otherwise stable MAP and GCS (patient was conscious and alert) as well as a normal urine output, the intensivist determined the single low MAP value was likely either a measurement error or a clinically insignificant transient episode of hypotension. After correcting the outlier MAP measurement to its most recent normal value (80mmHg) and then rerunning FQE and the influence analysis, the transition no longer has high influence and was not flagged.

\paragraph{Outcome: Identifying and correcting a temporal misalignment.}
The second highlighted transition (right) features a sudden drop in GCS and worsening MAP values, indicating a sudden deterioration of the patient's state, but treatment is not administered until the next timestep. The intensivist attributed this finding to a time stamp recording error. Again, influence analysis identified an inconsistency in the original data which had undue impact on evaluation. After correcting the inconsistency by shifting the two fluid treatments back by one timestep each, we found that the transition no longer had high influence and was not flagged.

\section{Discussion}

A key aim of this paper is to formulate a framework for using domain expertise to help in evaluating the trustworthiness of OPE methods for noisy and confounded observational data. The motivation for this research direction is the intersection of two realities: for messy real-world applications, the data itself might never be enough; and domain experts will always need to be involved in the integration of decision support tools, so we should incorporate their expertise into the evaluation process. We showcased influence analysis as one way of performing this task for value-based and IS OPE, but emphasize that such measures can and should be incorporated into other methods as well. For example, when modeling the dynamics in model-based OPE, the results can be tested for their agreement with expert intuition.

We stress that research to integrate human input into OPE methods to increase their reliability complements, and does not replace, the approaches for estimating error bounds and uncertainties over the errors of OPE estimates. The fact that traditional theoretical error bounds rely so heavily on assumptions which are generally impossible to verify from the data alone highlights the need for other techniques for gauging to what extent these assumptions hold.

\bibliography{references}
\bibliographystyle{icml2020}

\clearpage
\newpage

\appendix

\section{Derivations for Kernel-Based FQE}
\label{appendix:fqe}

\subsection{Proof of Proposition \ref{prop:matrix_fqe}}
\label{appendix:proof_matrix_fqe}

\setcounter{proposition}{0}
\begin{proposition}
\label{prop:matrix_fqe_restated}
For all transitions in the dataset, the values for corresponding state-action pairs are given by
\begin{align}
    \mathbf{\hat{q}}'_t &= \left( \sum_{t'=1}^t \gamma^{t'-1} \mathbf{M}'^{t'} \right) \mathbf{r} \equiv \mathbf{\Phi}'_t \mathbf{r} \label{eq:q_prime_estimate} \\
    \mathbf{\hat{q}}_t &= \mathbf{M} \left( \sum_{t'=1}^t \left( \gamma \mathbf{M}' \right)^{t'-1} \right) \mathbf{r} \equiv \mathbf{\Phi}_t \mathbf{r}.
\end{align}
where $\hat{q}'_{t, i}$ and $\hat{q}_{t, i}$ are the estimated policy values at $(x'^{(i)}, \pi_e(x'^{(i)}))$ and $(x^{(i)}, a^{(i)})$, respectively, for the observed transition $i$
\end{proposition}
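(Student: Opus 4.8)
The plan is to reduce the two closed-form expressions to a single one-step matrix recursion for kernel-based FQE and then unroll that recursion by induction. First I would make explicit the two averaging operators hidden in the kernel rule. At iteration $t$, FQE fits a nearest-neighbor regressor to the one-step targets $y_t^{(k)} = r^{(k)} + \gamma\,\hat{q}'_{t-1,k}$, so evaluating the fitted function at any query point returns the average of these targets over the data pairs within radius $R$. Evaluating at the starting pair $(x^{(i)},a^{(i)})$ gives a neighbor-average governed by $\mathbf{M}$, whereas evaluating at the next pair $(x'^{(i)},\pi_e(x'^{(i)}))$ gives a neighbor-average governed by $\mathbf{M}'$. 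This produces the coupled recursions
\begin{align}
\hat{\mathbf{q}}'_t &= \mathbf{M}'(\mathbf{r} + \gamma\,\hat{\mathbf{q}}'_{t-1}), \\
\hat{\mathbf{q}}_t &= \mathbf{M}(\mathbf{r} + \gamma\,\hat{\mathbf{q}}'_{t-1}),
\end{align}
together with the initialization $\hat{\mathbf{q}}'_0 = \mathbf{0}$.

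The observation I would stress is that the first recursion is self-contained: $\hat{\mathbf{q}}'_t$ depends on $\hat{\mathbf{q}}'_{t-1}$ only through $\mathbf{M}'$, and not on the starting-state values $\hat{\mathbf{q}}$. I would then prove \eqref{eq:q_prime_estimate} by induction on $t$. The base case $t=1$ is immediate from $\hat{\mathbf{q}}'_0 = \mathbf{0}$, which gives $\hat{\mathbf{q}}'_1 = \mathbf{M}'\mathbf{r}$, matching the single term $\gamma^{0}\mathbf{M}'^{1}\mathbf{r}$. For the inductive step I would substitute the hypothesis $\hat{\mathbf{q}}'_{t-1} = \sum_{t'=1}^{t-1}\gamma^{t'-1}\mathbf{M}'^{t'}\mathbf{r}$ into the recursion, distribute $\gamma\mathbf{M}'$ across the sum to obtain $\mathbf{M}'\mathbf{r} + \sum_{t'=1}^{t-1}\gamma^{t'}\mathbf{M}'^{t'+1}\mathbf{r}$, and then shift the summation index by one to absorb the extra factor of $\gamma\mathbf{M}'$, recovering $\sum_{t'=1}^{t}\gamma^{t'-1}\mathbf{M}'^{t'}\mathbf{r}$.

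With the closed form for $\hat{\mathbf{q}}'$ established, \eqref{eq:q_estimate} follows by direct substitution. Plugging the formula for $\hat{\mathbf{q}}'_{t-1}$ into the second recursion and using $\gamma\cdot\gamma^{t'-1}=\gamma^{t'}$ gives $\hat{\mathbf{q}}_t = \mathbf{M}\bigl(\mathbf{r} + \sum_{t'=1}^{t-1}\gamma^{t'}\mathbf{M}'^{t'}\mathbf{r}\bigr)$; recognizing the leading $\mathbf{r}$ as the $t'=0$ term turns the bracket into $\sum_{t'=0}^{t-1}(\gamma\mathbf{M}')^{t'}\mathbf{r}$, and a final index shift yields $\mathbf{M}\sum_{t'=1}^{t}(\gamma\mathbf{M}')^{t'-1}\mathbf{r}$, exactly as claimed.

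I expect the main obstacle to lie not in the algebra but in pinning down the recursions correctly, specifically in justifying that $\hat{\mathbf{q}}'$ evolves under $\mathbf{M}'$ alone. The tempting but incorrect alternative is $\hat{\mathbf{q}}'_t = \mathbf{M}'\hat{\mathbf{q}}_t$, i.e.\ averaging the already-fitted starting-state values, which would insert a spurious extra application of $\mathbf{M}$ and break the base case. The resolution I would articulate is that $\hat{\mathbf{q}}_t$ and $\hat{\mathbf{q}}'_t$ are both averages of the \emph{same} one-step target vector $\mathbf{r} + \gamma\hat{\mathbf{q}}'_{t-1}$, sharing that common argument and differing only in whether the starting-pair operator $\mathbf{M}$ or the next-pair operator $\mathbf{M}'$ is applied. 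Once this point is settled, the induction and the two geometric-series index shifts are entirely routine.
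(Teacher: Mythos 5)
Your proposal is correct and follows essentially the same route as the paper's proof: both set up the coupled recursions $\hat{\mathbf{q}}'_t = \mathbf{M}'(\mathbf{r} + \gamma\hat{\mathbf{q}}'_{t-1})$ and $\hat{\mathbf{q}}_t = \mathbf{M}(\mathbf{r} + \gamma\hat{\mathbf{q}}'_{t-1})$, prove the primed equation by induction with base case $\hat{\mathbf{q}}'_1 = \mathbf{M}'\mathbf{r}$, and obtain the unprimed equation by direct substitution and an index shift. Your added remark on why the primed recursion is self-contained (the targets are shared, only the averaging operator differs) is a helpful clarification of a point the paper leaves implicit, but it is not a different argument.
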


\begin{proof}

We first prove \ref{eq:q_prime_estimate} by induction. We start by noting that for a given observed transition, $i$, averaging over all observations $j$ such that $\Delta_{ij}$ holds can be written as $\frac{1}{N_i}\sum_{j:\Delta_{ij}} = \sum_{j} \frac{\mathbb{I}({\Delta_{ij}})}{N_i} = \sum_{j} M_{ij}$. Similarly, averaging over all $j$ such that $\Delta_{i'j}$ holds can be written as $\sum_{j} M'_{ij}$. Therefore, if $u(x, a)$ is some function over the sttate-action space and $\mathbf{u}$ is a vector containing the quantity $u_i = u(x^{(i)}, a^{(i)})$ for every $(x^{(i)}, a^{(i)})$, then the nearest-neighbors estimation of $u(x'^{(i)}, \pi(x'^{(i)}))$ is given by $\lbrack \mathbf{M}'\mathbf{u} \rbrack_i$.

Given the formulation above, for $t=1$, $\hat{q}'_{t,i}$ estimates the reward at $(x'^{(i)}, \pi(x'^{(i)}))$, and can be written as:

\begin{align}
    \mathbf{\hat{q}'}_1 = \mathbf{M}' \mathbf{r}.
\end{align}

For $t > 1$, assume $\mathbf{\hat{q}}'_{t-1} = \left( \sum_{t'=1}^{t-1} \gamma^{t'-1} \mathbf{M}'^{t'} \right) \mathbf{r}$. Then

\begin{align}
    \mathbf{\hat{q}}'_t &= \mathbf{M}' \left( \mathbf{r} + \gamma \mathbf{\hat{q}}'_{t-1} \right) \\ \nonumber
    &= \mathbf{M}' \left[ \mathbf{r} + \gamma \left( \sum_{t'=1}^{t-1} \gamma^{t'-1} \mathbf{M}'^{t'} \right) \mathbf{r} \right] \\ \nonumber
    &= \left( \mathbf{M}' + \sum_{t'=1}^{t-1} \gamma^{t'} \mathbf{M}'^{t'+1} \right) \mathbf{r} \\ \nonumber
    &= \left(\sum_{t'=0}^{t-1} \gamma^{t'} \mathbf{M}'^{t'+1} \right) \mathbf{r} \\ \nonumber
    &= \left( \sum_{t'=1}^{t} \gamma^{t'-1} \mathbf{M}'^{t'} \right) \mathbf{r} \equiv \mathbf{\Phi}'_t \mathbf{r},
\end{align}

completing the proof of \ref{eq:q_prime_estimate}. To estimate $\mathbf{\hat{q}}_t$, we write $\hat{q}_{t,i} = \frac{1}{N_i}\sum_{j:\Delta_{ij}} \left( r^{(j)} + \gamma \hat{q}'_{t-1,j} \right)$ or in matrix notation.

\begin{align}
    \mathbf{\hat{q}}_t &= \mathbf{M} \left( \mathbf{r} + \gamma \mathbf{\hat{q}}'_{t-1} \right)\\ \nonumber
    &= \mathbf{M} \left( \mathbf{I} + \gamma \left( \sum_{t'=1}^{t-1} \gamma^{t'-1} \mathbf{M}'^{t'} \right) \right) \mathbf{r} \\ \nonumber
    &= \mathbf{M} \left( \sum_{t'=1}^{t} \left( \gamma \mathbf{M}' \right)^{t'-1} \right) \mathbf{r} \equiv \mathbf{\Phi}_t \mathbf{r}. 
\end{align}

\end{proof}

\subsection{Computational complexity.} 
\label{appendix:kernel_fqe_complexity}

Computation of a single  influence value, $\tilde{I}_{i, j}$ requires summation over all transitions $k$ that satisfy $\Delta_{k'j}$. Denote the number of such neighbors by $N_{j'}^*$ \footnote{Note that $N_{j'}^*$, which counts all $k$ that satisfy $\Delta_{k'j}$, is subtly different from the quantity $N_{j'}$ introduced in section \ref{sec:kernel_based_fqe}, which counts all $k$ that satisfy $\Delta_{j'k}$.}. We expect $N_{j'}^*$ to be small and not scale with the size of the dataset, and also $\tilde{I}_{i, j}$ is inversely proportional to $N_{j'}^*$. Thus, if we only compute the influence of transitions such that $N_{j'}^* < \frac{v_{\text{max}}}{\hat{v}\tilde{I}_c} \equiv N_{j',c}^*$, where $v_{\text{max}}$ is the maximum possible value, we are guaranteed not to miss any transitions with influence larger than our threshold $\tilde{I}_c$. Since $N_{j',c}^*$ does not scale with the size of the data, computation of a single individual influence can effectively be done in constant time. Performing influence analysis on a full dataset requires computing the influences of all transitions on all initial transitions, and therefore takes $\mathcal{O}(N |\mathcal{D}^*_0|)$ time.

In our matrix formulation, the FQE evaluation itself is bottlenecked by computing the matrix $\Phi$, which includes computation of powers of $M'$. Because $M'$ is a sparse matrix (each row $i$ only has $N_i$ nonzero elements), the matrix multiplication itself can be done in $\mathcal{O}(N^2)$ rather than $\mathcal{O}(N^3)$ time, and the entire evaluation is done in $\mathcal{O}(N^2 T)$ time. Importantly, the influence analysis analyzing all transitions has lower complexity than the OPE, and should not significantly increase the computational cost of the evaluation pipeline.

\section{Derivations Linear Least-Squares FQE}
\label{appendix:ls_fqe}
\begin{proposition}
\label{prop:ls_fqe}
The the linear least square solution of fitted Q evaluation is 
$
    %\arg\min_{\mathbf{w} \in \mathbb{R}^d} \left\| \Feature\mathbf{w} - \mathbf{r} - \gamma \Feature_p \mathbf{w}\right\|_2^2 \\ = 
    (\Feature^\top \Feature - \gamma \Feature^\top \Feature_p)^{-1}\Feature^\top \mathbf{r}
$
\end{proposition}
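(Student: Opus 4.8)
The plan is to treat the stated expression as the fixed point of the linear FQE iteration and derive it from the per-iteration normal equations. First I would recall that at iteration $i$, FQE solves an ordinary least-squares regression whose inputs are the sampled features $\feature(x^{(n)},a^{(n)})$ (the rows of $\Feature$) and whose targets are $y_i^{(n)} = r^{(n)} + \gamma\,\hat{q}_{i-1}(x'^{(n)},\pi_e(x'^{(n)}))$. Since $\hat{q}_{i-1}(x,a)=\feature(x,a)^\top\mathbf{w}_{i-1}$, the target vector can be written compactly as $\mathbf{y}_i = \mathbf{r} + \gamma\,\Feature_p\,\mathbf{w}_{i-1}$, where I take the $n$-th row of $\Feature_p$ to be $\feature(x'^{(n)},\pi_e(x'^{(n)}))^\top$ so that the discount appears explicitly, matching the factor $\gamma$ in the statement (this is the bookkeeping convention consistent with how $\feature_{\pi,j}$ enters the subsequent Sherman--Morrison updates).

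Next I would write the closed-form least-squares solution at each step. Assuming $\Feature^\top\Feature$ is invertible, minimizing $\lVert \Feature\mathbf{w} - \mathbf{y}_i\rVert_2^2$ yields the normal equations $\Feature^\top\Feature\,\mathbf{w}_i = \Feature^\top\mathbf{y}_i$, i.e. $\mathbf{w}_i = (\Feature^\top\Feature)^{-1}\Feature^\top(\mathbf{r}+\gamma\Feature_p\mathbf{w}_{i-1})$. This exhibits the FQE update as an affine map $\mathbf{w}_{i-1}\mapsto\mathbf{w}_i$. I would then impose the self-consistency (fixed-point) condition $\mathbf{w}_i=\mathbf{w}_{i-1}=:\mathbf{w}$ that the converged FQE estimator must satisfy, obtaining $\Feature^\top\Feature\,\mathbf{w}=\Feature^\top\mathbf{r}+\gamma\Feature^\top\Feature_p\mathbf{w}$. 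Collecting the $\mathbf{w}$ terms gives $(\Feature^\top\Feature-\gamma\Feature^\top\Feature_p)\mathbf{w}=\Feature^\top\mathbf{r}$, and inverting the $D\times D$ matrix $\mathbf{C}=\Feature^\top\Feature-\gamma\Feature^\top\Feature_p$ produces exactly the claimed $\mathbf{w}=(\Feature^\top\Feature-\gamma\Feature^\top\Feature_p)^{-1}\Feature^\top\mathbf{r}$.

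The main obstacle is not the algebra but justifying that this fixed point is the correct object to identify with the FQE solution: I would need to argue that $\mathbf{C}$ is nonsingular and that the affine iteration actually converges to its unique fixed point rather than merely admitting it as a self-consistent candidate. Convergence holds precisely when the spectral radius of $\gamma(\Feature^\top\Feature)^{-1}\Feature^\top\Feature_p$ is strictly below one, which is the linear-function-class analogue of the usual $\gamma$-contraction of the Bellman operator. Under this condition $\mathbf{C}$ is automatically invertible, the geometric series $\sum_{i\ge 0}\big(\gamma(\Feature^\top\Feature)^{-1}\Feature^\top\Feature_p\big)^{i}$ converges, and the limit of $\mathbf{w}_i$ is well defined and equals the stated expression independently of initialization. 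I would state this contraction assumption explicitly (it parallels the well-definedness invoked after Proposition \ref{prop:matrix_fqe} for the kernel case, where the iteration is run past the longest trajectory) rather than attempt to establish it in full generality, since without it the "least-squares solution" of FQE need not be unique or even exist.
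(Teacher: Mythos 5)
Your proposal is correct and takes essentially the same route as the paper's proof: both write the least-squares solution with the FQE target $\mathbf{r} + \gamma\Feature_p\mathbf{w}$ treated as fixed, impose self-consistency in $\mathbf{w}$, and solve the resulting linear system $(\Feature^\top\Feature - \gamma\Feature^\top\Feature_p)\mathbf{w} = \Feature^\top\mathbf{r}$. Your additional discussion of when the affine iteration actually converges to this fixed point (the spectral-radius condition, under which $\mathbf{C}$ is guaranteed invertible) is a rigor bonus that the paper omits, but it does not alter the core argument.
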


\begin{proof}
The least-square solution of parameter vector $\mathbf{w}$ can be found by minimizing the following square error of the Bellman equation for all $(x,a)$ in the dataset:
\begin{align}
    &\left(\hat{q}(x,a) - r(x,a) - \gamma \hat{q}(x', \pi_e(x')) \right)^2 
\end{align}
Plugging in $\hat{q}(x,a) = \feature(x,a)^\top \mathbf{w}$, the square error is
\begin{align}
    \left( \feature(x,a)^\top \mathbf{w} - r(x,a) - \gamma \feature_{\pi}(x')^\top \mathbf{w}\right)^2
\end{align}
By definition of $\Feature$ and $\Feature_p$, the mean square error over the $N$ samples is:
\begin{align}
    \left\| \Feature\mathbf{w} - \mathbf{r} - \gamma \Feature_p \mathbf{w}\right\|_2^2
\end{align}
The least square solution is:
\begin{align}
    \mathbf{w} = (\Feature^\top \Feature)^{-1}\Feature^\top (\mathbf{r} + \gamma \Feature_p \mathbf{w}) \\
    (\Feature^\top \Feature) \mathbf{w} =\Feature^\top (\mathbf{r} + \gamma \Feature_p \mathbf{w})\\
    (\Feature^\top \Feature - \gamma \Feature^\top \Feature_p) \mathbf{w} =\Feature^\top \mathbf{r} \\
    \mathbf{w} = (\Feature^\top \Feature - \gamma \Feature^\top \Feature_p)^{-1}\Feature^\top \mathbf{r}
\end{align}

\end{proof}

\begin{proposition}
\label{prop:ls_fqe_influence}
Let $\mathbf{C}_{-j} = (\Feature_{-j}^\top \Feature_{-j} - \gamma \Feature_{-j}^\top \Feature_{p,-j})$ and $\mathbf{C} = (\Feature^\top \Feature - \gamma \Feature^\top \Feature_p)$.
\begin{align*}
    \mathbf{w}_{-j} &= \left(\mathbf{C}_{-j}\right)^{-1}\left( \Feature^\top \mathbf{r} - r^{(j)} \feature_j \right)\\
    \left(\mathbf{C}_{-j}\right)^{-1} &= \mathbf{B}_{j} - \frac{\gamma \mathbf{B}_{j} \feature_j \feature_{\pi,j}^\top \mathbf{B}_{j}}{1+ \gamma \feature_{\pi,j}^{\top}\mathbf{B}_{j}\feature_{j}} \\
\end{align*}
where
\begin{align*}
    \mathbf{B}_{j} &= \mathbf{C}^{-1} + \frac{\mathbf{C}^{-1} \feature_j \feature_j^\top \mathbf{C}^{-1}}{1- \feature_{j}^{\top}\mathbf{C}^{-1}\feature_{j}}
\end{align*}
\end{proposition}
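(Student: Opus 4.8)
The plan is to exploit the fact that deleting a single transition $\tau^{(j)}$ perturbs the matrix $\mathbf{C}$ by a low-rank term, so that $(\mathbf{C}_{-j})^{-1}$ can be obtained from $\mathbf{C}^{-1}$ by repeated application of the Sherman-Morrison formula rather than by inverting $\mathbf{C}_{-j}$ from scratch. The first step is to write the cross-product matrices as sums of per-sample outer products: since $\Feature^\top\Feature = \sum_{n} \feature_n \feature_n^\top$ and $\Feature^\top\Feature_p = \sum_{n} \feature_n \feature_{\pi,n}^\top$, deleting row $j$ simply removes the $j$-th summand from each, giving $\Feature_{-j}^\top\Feature_{-j} = \Feature^\top\Feature - \feature_j\feature_j^\top$ and $\Feature_{-j}^\top\Feature_{p,-j} = \Feature^\top\Feature_p - \feature_j\feature_{\pi,j}^\top$. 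Substituting these into the definition of $\mathbf{C}_{-j}$ yields the clean identity $\mathbf{C}_{-j} = \mathbf{C} - \feature_j\feature_j^\top + \gamma\,\feature_j\feature_{\pi,j}^\top$, exhibiting $\mathbf{C}_{-j}$ as $\mathbf{C}$ modified by two rank-one terms.

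Next I would peel off these two rank-one updates one at a time, in the order matching the statement. Applying Sherman-Morrison to the downdate $\mathbf{C} - \feature_j\feature_j^\top$ (with $\mathbf{u} = -\feature_j$, $\mathbf{v} = \feature_j$) gives precisely $\mathbf{B}_j = \mathbf{C}^{-1} + \frac{\mathbf{C}^{-1}\feature_j\feature_j^\top\mathbf{C}^{-1}}{1 - \feature_j^\top\mathbf{C}^{-1}\feature_j}$, so $\mathbf{B}_j$ is exactly $(\mathbf{C} - \feature_j\feature_j^\top)^{-1}$. I would then treat $\mathbf{C}_{-j} = (\mathbf{C} - \feature_j\feature_j^\top) + \gamma\,\feature_j\feature_{\pi,j}^\top$ as a rank-one update of this intermediate matrix and apply Sherman-Morrison a second time, now with base inverse $\mathbf{B}_j$ and vectors $\mathbf{u} = \gamma\feature_j$, $\mathbf{v} = \feature_{\pi,j}$. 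This directly produces the claimed $(\mathbf{C}_{-j})^{-1} = \mathbf{B}_j - \frac{\gamma\mathbf{B}_j\feature_j\feature_{\pi,j}^\top\mathbf{B}_j}{1 + \gamma\feature_{\pi,j}^\top\mathbf{B}_j\feature_j}$.

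Finally, for the formula for $\mathbf{w}_{-j}$ I would invoke Proposition \ref{prop:ls_fqe} applied to the reduced dataset, which gives $\mathbf{w}_{-j} = (\mathbf{C}_{-j})^{-1}\Feature_{-j}^\top\mathbf{r}_{-j}$, and then use the same per-sample decomposition $\Feature^\top\mathbf{r} = \sum_n r^{(n)}\feature_n$ to observe that dropping row $j$ gives $\Feature_{-j}^\top\mathbf{r}_{-j} = \Feature^\top\mathbf{r} - r^{(j)}\feature_j$, completing the proof. I expect the main obstacle to be essentially bookkeeping: getting the signs and the order of the two updates right, since the first update is a subtraction, which flips the usual Sherman-Morrison sign and turns the $1 + \cdots$ denominator into $1 - \feature_j^\top\mathbf{C}^{-1}\feature_j$, whereas the second is an addition. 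One should also record the implicit regularity assumptions, namely that $\mathbf{C}$, the intermediate matrix $\mathbf{C} - \feature_j\feature_j^\top$, and $\mathbf{C}_{-j}$ are all invertible, equivalently that both Sherman-Morrison denominators are nonzero, which is exactly the condition under which the FQE solution on the reduced data is well defined.
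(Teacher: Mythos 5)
Your proposal is correct and follows essentially the same route as the paper's proof: decompose $\mathbf{C}_{-j} = \mathbf{C} - \feature_j\feature_j^\top + \gamma\,\feature_j\feature_{\pi,j}^\top$ via per-sample outer products, apply the Sherman--Morrison formula twice (first the downdate giving $\mathbf{B}_j$, then the $\gamma\,\feature_j\feature_{\pi,j}^\top$ update), and obtain $\mathbf{w}_{-j}$ from Proposition~\ref{prop:ls_fqe} on the reduced data together with $\Feature_{-j}^\top\mathbf{r}_{-j} = \Feature^\top\mathbf{r} - r^{(j)}\feature_j$. Your explicit handling of the sign flip in the first (subtractive) update and your statement of the invertibility assumptions are minor refinements the paper leaves implicit, but the argument is the same.
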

\begin{proof}
By the list squares solution of FQE,
$\mathbf{w}_{-j}$ equals $(\Feature_{-j}^\top \Feature_{-j} - \gamma \Feature_{-j}^\top \Feature_{p,-j})^{-1}\Feature_{-j}^\top \mathbf{r}_{-i}$.
Since 
$\Feature_{-j}^\top \mathbf{r}_{-j} = \Feature^\top \mathbf{r} - r^{(i)} \feature_j,
$
we have that $\mathbf{w}_{-j} = \left(\mathbf{C}_{-j}\right)^{-1}\left( \Feature^\top \mathbf{r} - r^{(j)} \feature_j \right)$. Then
\begin{align}
    \mathbf{C}_{-j} &= \mathbf{C} - \feature_j \feature_j^\top + \gamma \feature_j \feature_{\pi,j}^\top,
\end{align}
because
\begin{align}
    \Feature_{-j}^\top \Feature_{-j} &= \Feature^\top \Feature - \feature_j \feature_j^\top\\
    \Feature_{-j}^\top \Feature_{p,-j} &= \Feature^\top \Feature_p - \feature_j \feature_{\pi,j}^\top
\end{align}
This indicate $\mathbf{C}_{-j}$ equals $\mathbf{C}$ plus two rank-1 matrices. Fortunately, we can store $\mathbf{C}^{-1}$ when we compute $\mathbf{w}$ and $\hat{q}$. The following result named Sherman–Morrison formula allow us to compute $\mathbf{C}_{-j}^{-1}$ from $\mathbf{C}^{-1}$ in an efficient way. For any invertible matrix $\mathbf{A} \in \mathbb{R}^{d \times d}$ and vector $u$, $v \in \mathbb{R}^d$:
\begin{align}
    (\mathbf{A} + uv^\top)^{-1} = \mathbf{A}^{-1} - \frac{\mathbf{A}^{-1} uv^\top \mathbf{A}^{-1}}{1+ v^{\top}\mathbf{A}^{-1}u}
\end{align}
Then if we define $\mathbf{B}_{j} \equiv \left(\mathbf{C} - \feature_j \feature_j^\top\right)^{-1}$, we have that
\begin{align}
    \mathbf{B}_{j} &= \mathbf{C}^{-1} + \frac{\mathbf{C}^{-1} \feature_j \feature_j^\top \mathbf{C}^{-1}}{1- \feature_{j}^{\top}\mathbf{C}^{-1}\feature_{j}}\\
    \left(\mathbf{C}_{-j}\right)^{-1} &= \mathbf{B}_{j} - \frac{\gamma \mathbf{B}_{j} \feature_j \feature_{\pi,j}^\top \mathbf{B}_{j}}{1+ \gamma \feature_{\pi,j}^{\top}\mathbf{B}_{j}\feature_{j}} 
\end{align}
\end{proof}

\section{Influence computation of importance sampling methods}
\label{appendix:is_computation}

The standard importance sampling (IS) estimator is given by

\begin{equation}
\label{eq:is}
    \hat{v}^{\pi_e}_{IS} = \frac{1}{N}\sum_{n=1}^N w_{0:T}^{(n)} g_T^{(n)}, 
\end{equation}

where the summation is over all $N$ trajectories in the dataset, and the importance sampling weight $w_{0:t}$ is given by

\begin{equation}
    \label{eq:is_weight}
    w_{0:t}^{(n)} = \prod_{t'=0}^t \frac{\pi_e(a_{t'}^{(n)}|s_{t'}^{(n)})}{\pi_b(a_{t'}^{(n)}|s_{t'}^{(n)})}.
\end{equation}

The total influence of trajectory $j$ is then

\begin{align}
    \label{eq:is_inf}
    I_j &= \hat{v}_{-j} - \hat{v} \\ \nonumber
    &= \frac{1}{N-1}\sum_{n \neq j} w_{0:T}^{(n)} g_T^{(n)} - \frac{1}{N}\sum_{n=1}^{N} w_{0:T}^{(n)} g_T^{(n)} \\ \nonumber
    &= \sum_{n=1}^{N} w_{0:T}^{(n)} g_T^{(n)} \left( \frac{1}{N-1} - \frac{1}{N} \right) - \frac{1}{N-1} w_{0:T}^{(j)} g_T^{(j)} \\ \nonumber
    &= \frac{1}{N(N-1)} \sum_{n=1}^{N} w_{0:T}^{(n)} g_T^{(n)} - \frac{1}{N-1} w_{0:T}^{(j)} g_T^{(j)} \\ \nonumber
    &= \frac{1}{N-1} \left( \hat{v}_{IS} - w_{0:T}^{(j)} g_T^{(j)} \right).
\end{align}

This relation is nothing more then the fact that removing the $j^{th}$ sample from an average over $N$ samples, $\bar{x} = \frac{1}{N} \sum x^{(n)}$, changes the average by $\frac{1}{N-1}(\bar{x} - x^{(j)})$.

Using the same derivation we can compute the influence of the per-decision importance sampling estimator (PDIS) and doubly-robust importance sampling estimator (DR):

\paragraph{PDIS}

For the PDIS estimator, given by

\begin{align}
    \hat{v}_{PDIS}^{\pi_e} = \frac{1}{N} \sum_{n=1}^N \sum_{t=0}^{T-1} w_{0:t}^{(n)} \gamma^t r_t^{(n)},
\end{align}

the total influence of trajectory $j$ is given by

\begin{align}
    I_j = \frac{1}{N-1} \left( \hat{v}_{PDIS} - \sum_{t=0}^{T-1} w_{0:t}^{(j)} \gamma^t r_t^{(j)} \right).
\end{align}

\paragraph{DR}

For the DR estimator \citep{jiang2015doubly}, given by

\begin{align}
    \hat{v}_{DR}^{\pi_e} &= \frac{1}{N} \sum_{n=1}^N \Bigg( \Bigg. \sum_{t=0}^{T-1} \gamma^t \Bigg( \Bigg. w_{0:t}^{(n)} r_t^{(n)} \\ \nonumber
    &- w_{0:t}^{(n)} \tilde{q}(x^{(n)}_t, a^{(n)}_t) + w_{0:t-1}^{(n)} \tilde{v}(x^{(n)}_t) \Bigg. \Bigg) \Bigg. \Bigg),
\end{align}

where $\tilde{v}$ and $\tilde{q}$ are independent estimates of the value function, the total influence of trajectory $j$ is given by

\begin{align}
    I_j &= \frac{1}{N-1} \Bigg( \Bigg. \hat{v}_{DR} - \sum_{t=0}^{T-1} \gamma^t \Bigg( \Bigg. w_{0:t}^{(j)} r_t^{(n)}  \\ \nonumber
     &- w_{0:t}^{(j)} \tilde{q}(x^{(j)}_t, a^{(j)}_t) + w_{0:t-1}^{(j)} \tilde{v}(x^{(j)}_t) \Bigg. \Bigg) \Bigg. \Bigg).
\end{align}

\subsection{Influence of weighted IS estimators}

For weighted estimators such as weighted importance sampling (WIS) given by 

\begin{equation}
    \hat{v}^{\pi_e}_{WIS} = \frac{1}{\sum_{n=1}^N w_{0:T}^{(n)}}\sum_{n=1}^N w_{0:T}^{(n)} g_T^{(n)},
\end{equation}

the influence calculation is slightly different, and requires caching the sum of weights of all trajectories in the data.

\begin{align}
    I_j &= \hat{v}_{-j} - \hat{v} \\ \nonumber
    &= \frac{1}{\sum_{n \neq j} w_{0:T}^{(n)}} \sum_{n \neq j} w_{0:T}^{(n)} g_T^{(n)} \\ \nonumber
    &- \frac{1}{\sum_{n=1}^{N} w_{0:T}^{(n)}} \sum_{n=1}^{N} w_{0:T}^{(n)} g_T^{(n)} \\ \nonumber
    &= \sum_{n=1}^{N} w_{0:T}^{(n)} g_T^{(n)} \left( \frac{1}{\sum_{n \neq j} w_{0:T}^{(n)}} - \frac{1}{\sum_{n=1}^N w_{0:T}^{(n)}} \right) \\ \nonumber
    &- \frac{1}{\sum_{n \neq j} w_{0:T}^{(n)}} w_{0:T}^{(j)} g_T^{(j)} \\ \nonumber
    &= \frac{w_{0:T}^{(j)}}{(\sum_{n \neq j} w_{0:T}^{(n)})(\sum_{n=1}^N w_{0:T}^{(n)})} \sum_{n=1}^{N} w_{0:T}^{(n)} g_T^{(n)} \\ \nonumber
    &- \frac{1}{\sum_{n \neq j} w_{0:T}^{(n)}} w_{0:T}^{(j)} g_T^{(j)} \\ \nonumber
    &= \frac{w_{0:T}^{(j)}}{W - w_{0:T}^{(n)}} \left( \hat{v}_{WIS} - g_T^{(j)} \right).
\end{align}

In the last expression, $W = \sum_{n=1}^{N} w_{0:T}^{(n)}$ is the cached sum of weights.

\paragraph{WDR}

For the weighted doubly robust estimator (WDR) \citep{thomas2016data} the influence calculation is conceptually similar, but the fact that the sum of weights which normalizes the estimator is time dependant makes it a little more tedious and requires caching a number of values which scales with the horizon, $T$. The estimator is given by

\begin{align}
    \hat{v}_{WDR}^{\pi_e} &= \sum_{n=1}^N \Bigg( \Bigg. \sum_{t=0}^{T-1} \gamma^t \Bigg( \Bigg. \frac{w_{0:t}^{(n)}}{W_t} r_t^{(n)} \\ \nonumber
    &- \frac{w_{0:t}^{(n)}}{W_t} \tilde{q}(x^{(n)}_t, a^{(n)}_t) + \frac{w_{0:t-1}^{(n)}}{W_{t-1}} \tilde{v}(x^{(n)}_t) \Bigg. \Bigg) \Bigg. \Bigg),
\end{align}

where we define $W_t = \sum_{n=1}^N w_{0:t}^{(n)}$.

If we switch the order of summation and treat the three terms in the sum independently, we can think of the estimator as being composed of $3T$ terms:

\begin{align}
\label{eq:wdr_decomposition}
    \hat{v}_{WDR}^{\pi_e} = \sum_{t=0}^{T-1} \gamma^t &\Bigg( \Bigg. \sum_{n=1}^N \frac{w_{0:t}^{(n)}}{W_t} r_t^{(n)} \\ \nonumber
    &-\sum_{n=1}^N \frac{w_{0:t}^{(n)}}{W_t} \tilde{q}(x^{(n)}_t, a^{(n)}_t) \\ \nonumber
    &+\sum_{n=1}^N \frac{w_{0:t-1}^{(n)}}{W_{t-1}} \tilde{v}(x^{(n)}_t) \Bigg. \Bigg).
\end{align}

For a given $t$, let's look at the resulting difference in the first term if trajectory $j$ is removed from the data:

\begin{align}
\label{eq:wdr_term_1}
    &\sum_{n \neq j} \frac{w_{0:t}^{(n)}}{(W_t)_{-j}} r_t^{(n)} - \sum_{n=1}^N \frac{w_{0:t}^{(n)}}{W_t} r_t^{(n)} \\ \nonumber
    &= \sum_{n=1}^N w_{0:t}^{(n)} r_t^{(n)}  \left( \frac{1}{(W_t)_{-j}} - \frac{1}{W_t} \right) \\ \nonumber
    &- \frac{w_{0:t}^{(j)}}{(W_t)_{-j}} r_t^{(j)} \\ \nonumber
    &= \frac{w_{0:t}^{(j)}}{(W_t)_{-j}} \left( \sum_{n=1}^N \frac{w_{0:t}^{(n)} r_t^{(n)}}{W_t} -  r_t^{(j)} \right) \\ \nonumber
    &= \frac{w_{0:t}^{(j)}}{W_t - w_{0:t}^{j}} \left( A_t -  r_t^{(j)} \right),
\end{align}

where we defined $A_t = \sum_{n=1}^N \frac{w_{0:t}^{(n)} r_t^{(n)}}{W_t}$.

If we similarly define $B_t = \sum_{n=1}^N \frac{w_{0:t}^{(n)} \tilde{q}(x^{(n)}_t, a^{(n)}_t)}{W_t}$ and $C_t = \sum_{n=1}^N \frac{w_{0:t-1}^{(n)} \tilde{v}(x^{(n)}_t)}{W_{t-1}}$ and repeat the calculation in Equation \ref{eq:wdr_term_1} for the second and third terms in Equation \ref{eq:wdr_decomposition} (note the time offset in the definition of $C_t$) we see that the influence of the WDR is given by

\begin{align}
    I_j = \sum_{t=0}^{T-1} \gamma^t &\Bigg( \Bigg. \frac{w_{0:t}^{(j)}}{W_t - w_{0:t}^{j}} \left( A_t -  r_t^{(j)} \right) \\ \nonumber
    &- \frac{w_{0:t}^{(j)}}{W_t - w_{0:t}^{j}} \left( B_t -  \tilde{q}(x^{(j)}_t, a^{(j)}_t) \right) \\ \nonumber
    &+ \frac{w_{0:t-1}^{(j)}}{W_{t-1} - w_{0:t-1}^{j}} \left( C_t -  \tilde{v}(x^{(j)}_t) \right) \Bigg. \Bigg).
\end{align}.

% \begin{align}
% \label{eq:wdr_term_3}
%     &\sum_{n \neq j} \frac{w_{0:t-1}^{(n)}}{(W_{t-1})_{-j}} \tilde{v}(x^{(n)}_t) - \sum_{n=1}^N \frac{w_{0:t-1}^{(n)}}{W_{t-1}} \tilde{v}(x^{(n)}_t) \\ \nonumber
%     &= \sum_{n=1}^N w_{0:t-1}^{(n)} \tilde{v}(x^{(n)}_t)  \left( \frac{1}{(W_{t-1})_{-j}} - \frac{1}{W_{t-1}} \right) \\ \nonumber
%     &- \frac{w_{0:t-1}^{(j)}}{(W_{t-1})_{-j}} \tilde{v}(x^{(j)}_t)  \\ \nonumber
%     &= \frac{w_{0:t-1}^{(j)}}{(W_{t-1})_{-j}} \left( \sum_{n=1}^N \frac{w_{0:t-1}^{(n)} \tilde{v}(x^{(n)}_t)}{W_{t-1}} - \tilde{v}(x^{(j)}_t) \right) \\ \nonumber
%     &= \frac{w_{0:t-1}^{(j)}}{W_{t-1} - w_{0:t-1}^{j}} \left( C_t -  \tilde{v}(x^{(j)}_t) \right).
% \end{align}

\section{Preprocessing and experimental details for MIMIC-III acute hypotension dataset}
\label{appendix:mimic_details}

In this section, we describe the preprocessing we performed on the raw MIMIC-III database to convert it into a dataset amenable to modeling with RL. This preprocessing procedure was done in close consultation with the intensivist collaborator on our team.

\subsection{Cohort Selection}

We use MIMIC-III v1.4 \citep{johnson2016mimic}, which contains information from about 60,000 intensive care unit (ICU) admissions to Beth Israel Deaconess Medical Center. We filter the initial database on the following features: admissions where data was collected using the Metavision clinical information system; admissions to a medical ICU (MICU); adults (age $\geq$ 18 years); initial ICU stays for hospital admissions with multiple ICU stays; ICU stays with a total length of stay of at least 24 hours; and ICU stays where there are 7 or more mean arterial pressure (MAP) values of 65mmHg or less, indicating probable acute hypotension. For long ICU stays, we limit to only using information captured during the inital 48 hours after admission, as our intensivist advised that care for hypotension during later periods of an ICU stay often look very different. After this filtering, we have a final cohort consisting of 1733 distinct ICU admissions. For computational convenience, we further down-sample this cohort, and use 20\% (346) ICU stays to use to learn a policy, and another 20\% (346) ICU stays to evaluate the policy via FQE and our proposed influence analysis. 

\subsection{Clinical Variables Considered}

Given our final cohort of patients admitted to the ICU, we next discuss the different clinical variables that we extract that are relevant to our task of acute hypotension management. 

The two first-line treatments are intravenous (IV) fluid bolus therapy, and vasopressor therapy. We construct fluid bolus variables in the following way:
\begin{enumerate}
    \item We filter all fluid administration events to only include NaCl 0.9\%, lactated ringers, or blood transfusions (packed red blood cells, fresh frozen plasma, or platelets).
    \item Since a fluid bolus should be a nontrivial amount of fluid administered over a brief period of time, we further filter to only fluid administrations with a volume of at least 250mL and over a period of 60 minutes or shorter.
\end{enumerate}
Each fluid bolus has an associated volume, and a starting time (since a bolus is given quickly / near-instantaneously, we ignore the end-time of the administration). To construct vasopressors, we first normalize vasopressor infusion rates across different drug types as follows, using the same normalization as in \citet{komorowski2018artificial}:
\begin{enumerate}
    \item Norepinephrine: this is our ``base'' drug, as it's the most commonly administered. We will normalize all other drugs in terms of this drug. Units for vasopressor rates are in mcg per kg body weight per minute for all drugs except vasopressin.
    \item Vasopressin: the original units are in units/min. We first clip any values above 0.2 units/min, and then multiply the final rates by 5.
    \item Phenylephrine: we multiply the original rate by 0.45.
    \item Dopamine: we multiply the original rate by 0.01.
    \item Epinephrine: this drug is on the same scale as norepinephrine and is not rescaled.
\end{enumerate}
As vasopressors are given as a continuous infusion, they consist of both a treatment start time and stop time, as well as potentially many times in the middle where the rates are changed. More than a single vasopressor may be administered at once, as well.

We also use 11 other clinical variables as part of the state space in our application: serum creatinine, FiO$_2$, lactate, urine output, ALT, AST, diastolic/systolic blood pressure, mean arterial pressure (MAP; the main blood pressure variable of interest), PO$_2$, and the Glasgow Coma Score (GCS). 

\subsection{Selecting Action Times}

Given a final cohort, clinical variables, and treatment variables, we still must determine how to discretize time and choose at which specific time points actions should be chosen. To arrive at a final set of ``action'' times for a specific ICU stay, we use the following heuristic-based algorithm:
\begin{enumerate}
    \item We start by including all times a treatment is started, stopped, or modified.
    \item Next, we remove consecutive treatment times if there are no MAP measurements between treatments. We do this because without at least one MAP measurement in between treatments, we would not be able to assess what effect the treatment had on blood pressure. This leaves us with a set of time points when treatments were started or modified. 
    \item At many time points, the clinician consciously chooses not to take an action. Unfortunately, this information is not generally recorded (although, on occasion, may exist in clinical notes). As a proxy, we consecutively add to our existing set of ``action times'' any time point at which an abnormally low MAP is observed ($<60$mmHg) and there are no other ``action times'' within a 1 hour window either before or after. This captures the relatively fine-granularity with which a physician may choose not to treat despite some degree of hypotension.
    \item Last, we add additional time points to fill in any large gaps where no ``action times'' exist. We do this by adding time points between existing ``action times'' until there are no longer any gaps greater than 4 hours between actions. This makes some clinical sense, as patients in the ICU are being monitored relatively closely, but if they are more stable, their treatment decisions will be made on a coarser time scale.
\end{enumerate}

Now that we have a set of action times for each trajectory, we can count up the total number of transitions in our training and evaluation datasets (both of which consist of 346 trajectories). The training trajectories contain a total of 6777 transitions, while there are 6863 total transitions in the evaluation data. Trajectories vary in length from a minimum of 7 transitions to a maximum of 49, with 16, 18, and 23 transitions comprising the 25\%, 50\%, and 75\% quantiles, respectively.

\subsection{Action Space Construction}

Given treatment timings, doses, and manually identified ``action times'' at which we want to assess what type of clinical decision was made, we can now construct our action space. We choose to operate in a discrete action space, which means we need to decide how to bin each of the continuous-valued treatment amounts.  

Binning of IV fluids is more natural and easier, as fluid boluses are generally given in discrete amounts. The most common bolus sizes are 500mL and 1000mL, so we bin fluid bolus volumes into the following 4 bins, which correspond to ``none''/``low''/``medium''/``high'' (in mL): $\{0, [250,500), [500,1000), [1000,\infty]\}$, although in practice very few boluses of more than 2L are ever given. Given this binning scheme, we can simply add up the total amount of fluids administered during any adjacent action times to determine which discrete fluid amount we should code the action as.

Binning of vasopressors is slightly more complex. These drugs are dosed at a specific rate, and there may be many rate changes made between action times, or sometimes there are several vasopressors being given at once. We chose to first add up the cumulative amount of (normalized) vasopressor drug administered between action times, and then normalize this amount by the size of the time window between action times to account for the irregular spacing. Finally, we also bin vasopressors into 4 discrete bins corresponding to ``none''/``low''/``medium''/``high'' amounts: $\{0, (0,8.1), [8.1, 21.58), [21.58, \infty]\}$. The relevant units here are total mcg of drug given each hour, per kg body weight. Since the distribution of values for vasopressors is not as naturally discrete, we chose our bin sizes using the 33.3\% and 66.7\% quantiles of dose amounts.

In the end, we have an action space with 16 possible discrete actions, considering all combinations of each of the 4 vasopressor amounts and fluid bolus amounts.

\subsection{State Construction}

Given a patient cohort, decision/action times, and discrete actions, we are now ready to construct a state space. For simplicity in this initial work, we first start with the 11 clinical time series variables previously listed. If a variable is never measured, we use the population median as a placeholder. If a variable has been measured before, we use the most recent measurement. The sole exception to this is the 3 blood pressure variables. For the blood pressures, we instead use the minimum (or worst) value observed since the last action.

We add to these a number of indicator variables that denote whether a particular variable was recently measured or not. Due to the strongly missing-not-at-random nature of clinical time series, there is often considerable signal in knowing that certain types of measurements were recently taken, irrespective of the measurement values \citep{agniel2018biases}. We choose to construct indicator variables denoting whether or not a urine output was taken since the last action time, and whether a GCS was recorded since the last action. We also include state features denoting whether the following labs/vitals were \textit{ever} ordered: creatinine, FiO$_2$, lactate, ALT, AST, PO$_2$. We do not include these indicators for all 11 clinical variables, as most of the vitals are recorded at least once an hour, and sometimes even more frequently. In total, 8 indicators comprise part of our state space.

Last, we include 10 additional variables that summarize past treatments administered, if any. We first include 6 indicator variables (3 for each treatment type) denoting which dose of fluid and vasopressor, if any, was chosen at the last action time. Last, for each treatment type we include two final features that summarize past actual amounts of treatments administered (the total amount of this treatment administered up until the current time, and the total amount of this treatment administered within the last 8 actions.

In total, our final state space has 29 dimensions. In future work we plan to explore richer state representations.

\subsection{Reward Function Construction}

In this preliminary work, we use a simple reward that is a piecewise linear function of the MAP in the next state. In particular, the reward takes on a value of $-1$ at 40mmHg, the lowest attainable MAP in the data. It increases linearly to -0.15 at 55mmHg, linearly from there to -0.05 at 60mmHg, and achieves a maximum value of 0 at 65mmHg, a commonly used target for blood pressure in the ICU \citep{asfar2014high}. However, if a patient has a urine output of 30mL/hour or higher, then any MAP values of 55mmHg or higher are reset to 0. This attempts to mimic the fact that a clinician will not be too concerned if a patient is slightly hypotensive but otherwise stable, since a modest urine output indicates that the modest hypotension is not a real problem.

\subsection{Choice of Kernel Function}

In order to use kernel-based FQE, we need to define a kernel that defines similarity between states. In consultation with our intensivist collaborator, we chose a simple weighted Euclidean distance, where each state variable receives a different weight based on its estimated importance to the clinical problem. We show all weights in Table \ref{table:kernel-func}.

\begin{table}
\caption{Weights for each state variable in our kernel function.}
\label{table:kernel-func}

\begin{tabular}{|l|l|}
\hline
\textbf{State Variable} & \textbf{Kernel Weight} \\ \hline
Creatinine & 3 \\ \hline
FiO2 & 15 \\ \hline
Lactate & 10 \\ \hline
Urine Output & 15 \\ \hline
\begin{tabular}[c]{@{}l@{}}Urine Output \\ since last action?\end{tabular} & 5 \\ \hline
ALT & 5 \\ \hline
AST & 5 \\ \hline
Diastolic BP & 5 \\ \hline
MAP & 15 \\ \hline
PO2 & 3 \\ \hline
Systolic BP & 5 \\ \hline
GCS & 15 \\ \hline
\begin{tabular}[c]{@{}l@{}}GCS since \\ last action?\end{tabular} & 5 \\ \hline
\begin{tabular}[c]{@{}l@{}}Creatinine ever\\ taken?\end{tabular} & 3 \\ \hline
\begin{tabular}[c]{@{}l@{}}FiO2 ever \\ taken?\end{tabular} & 15 \\ \hline
\begin{tabular}[c]{@{}l@{}}Lactate ever\\ taken?\end{tabular} & 10 \\ \hline
ALT ever taken? & 5 \\ \hline
AST ever taken? & 5 \\ \hline
PO2 ever taken? & 3 \\ \hline
\begin{tabular}[c]{@{}l@{}}Low vasopressor\\ done last time?\end{tabular} & 15 \\ \hline
\begin{tabular}[c]{@{}l@{}}Medium vasopressor\\ done last time?\end{tabular} & 15 \\ \hline
\begin{tabular}[c]{@{}l@{}}High vasopressor\\ done last time?\end{tabular} & 15 \\ \hline
\begin{tabular}[c]{@{}l@{}}Low fluid done\\ last time?\end{tabular} & 15 \\ \hline
\begin{tabular}[c]{@{}l@{}}Medium fluid done\\ last time?\end{tabular} & 15 \\ \hline
\begin{tabular}[c]{@{}l@{}}High fluid done\\ last time?\end{tabular} & 15 \\ \hline
\begin{tabular}[c]{@{}l@{}}Total vasopressors\\ so far\end{tabular} & 15 \\ \hline
Total fluids so far & 15 \\ \hline
\begin{tabular}[c]{@{}l@{}}Total vasopressors\\ last 8 actions\end{tabular} & 15 \\ \hline
\begin{tabular}[c]{@{}l@{}}Total fluids \\ last 8 actions\end{tabular} & 15 \\ \hline
\end{tabular}
\end{table}

Since technically we need a kernel over both all possible states and actions for FQE and influence analysis, we augment our kernel with extremely large weights so that effectively the kernel only compares pairs $(s,a)$ and $(s',a')$ for $a=a'$. Other choices should be made for continuous action spaces.

\subsection{Hyperparameters}

We use the training set of 6777 trajectories to learn a policy to then evaluate using FQE and influence analysis. In particular, we learn a deterministic policy by taking the most common action within the 50 nearest neighbors of a given state, with respect to the kernel in Table \ref{table:kernel-func}. We use a discount of $\gamma=1$ so that all time steps are treated equally, and use a neighborhood radius of 7 for finding nearest neighbors in FQE. Lastly, for the influence analysis, we use a threshold of 0.05, or 5\%, so that transitions which will affect the FQE value estimate by more than 5\% are flagged for expert review.

\section{Additional Results from MIMIC-III acute hypotension dataset}
\label{sec:additional-mimic-results}

In the main body of the paper, we showed two qualitative results figures showing 2 of the 6 highly influential transitions flagged by influence analysis. In this section, we show the remaining 4 influential transitions. 

\begin{figure}[H]
\centering
\includegraphics[width=0.45\textwidth]{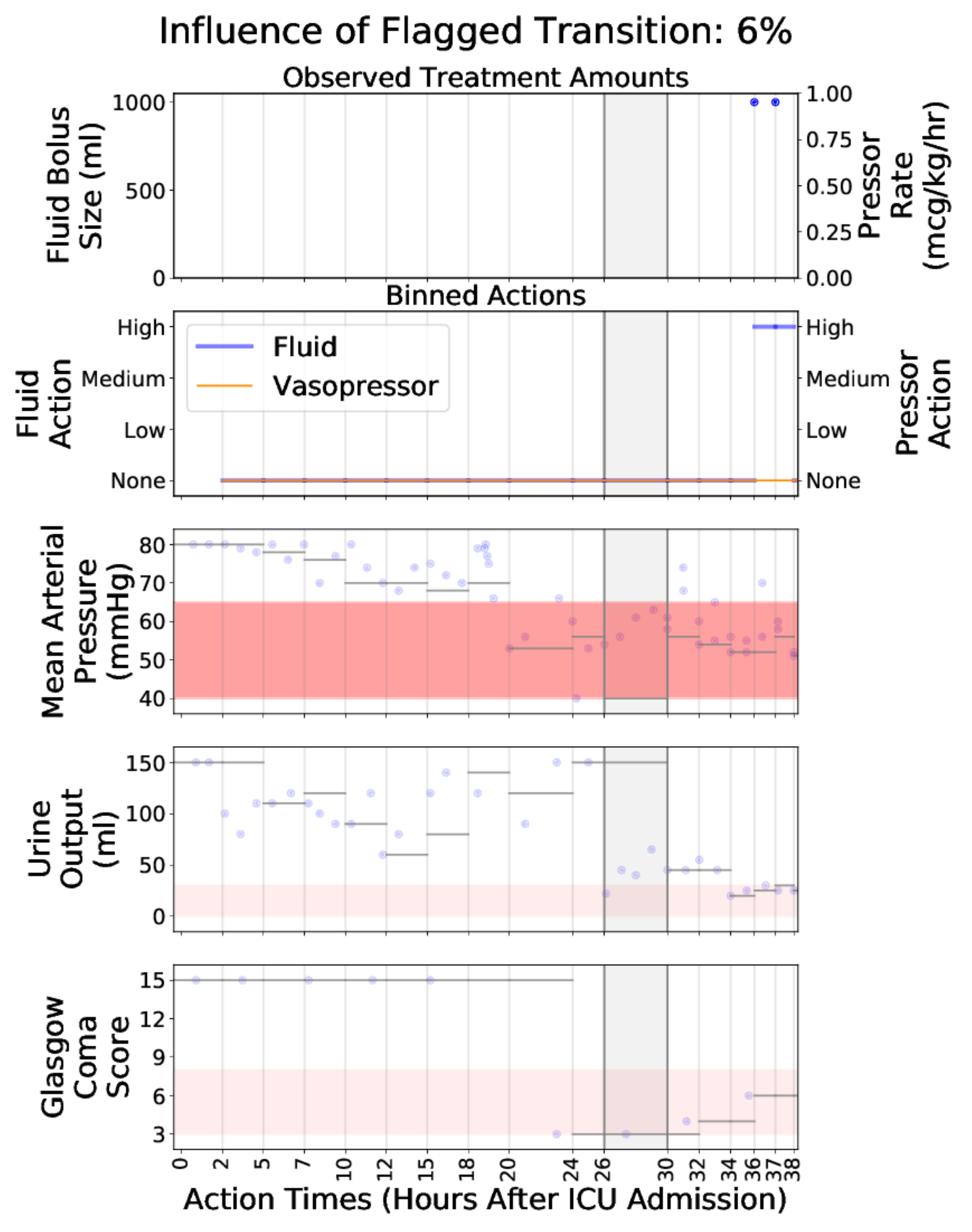}
\caption{An additional example identified by our influence analysis as having an especially high effect on the OPE value estimate.} 
\label{fig:mimic-9}
\end{figure}

\begin{figure}[H]
\centering
\includegraphics[width=0.45\textwidth]{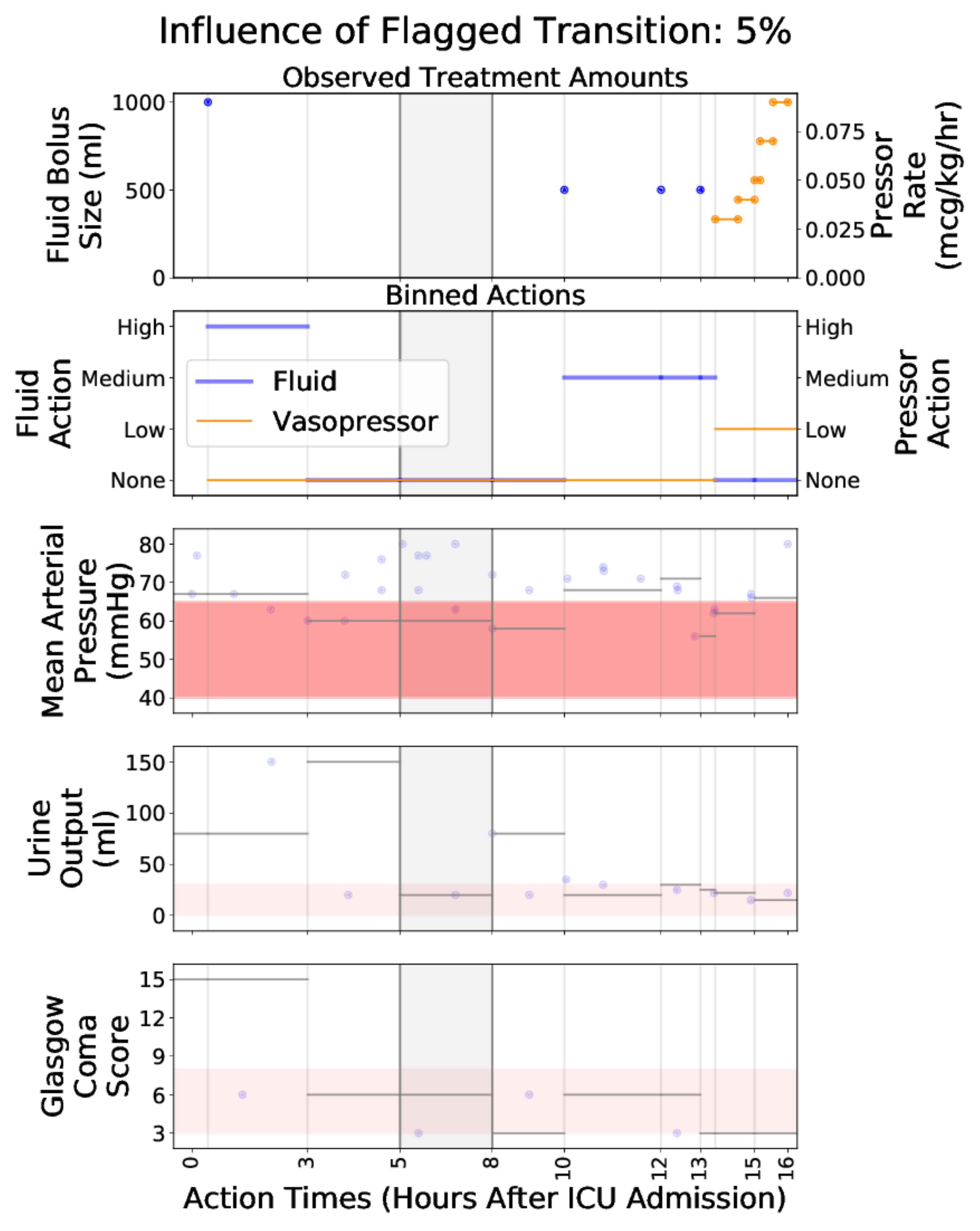}
\caption{An additional example identified by our influence analysis as having an especially high effect on the OPE value estimate. Note that this transition is from the same trajectory as the influential transition highlighted in Figure \ref{fig:mimic-3}} 
\label{fig:mimic-2}
\end{figure}

\begin{figure}[H]
\centering
\includegraphics[width=0.45\textwidth]{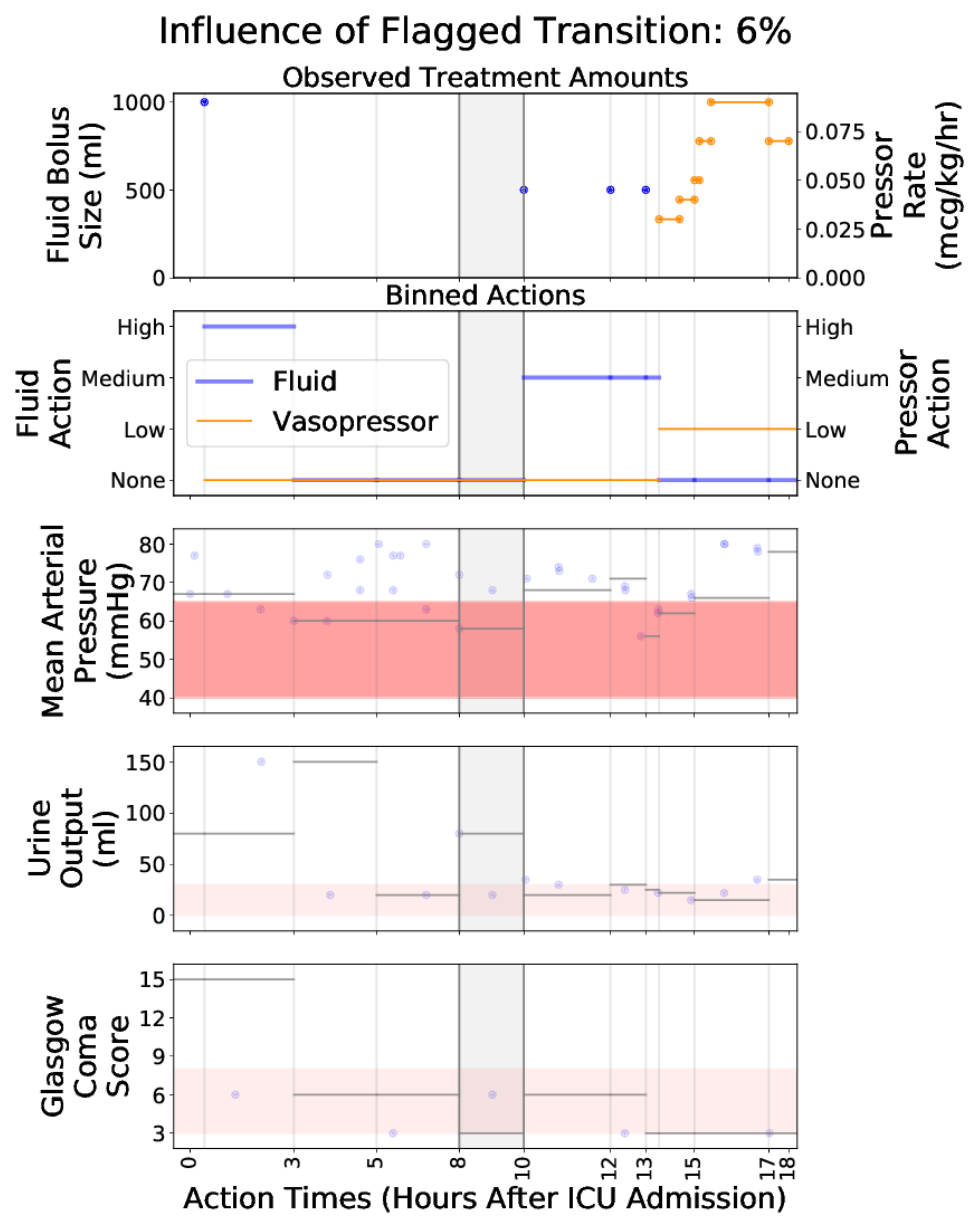}
\caption{An additional example identified by our influence analysis as having an especially high effect on the OPE value estimate. Note that this transition is from the same trajectory as the influential transition highlighted in Figure \ref{fig:mimic-2}} 
\label{fig:mimic-3}
\end{figure}

\begin{figure}[H]
\centering
\includegraphics[width=0.45\textwidth]{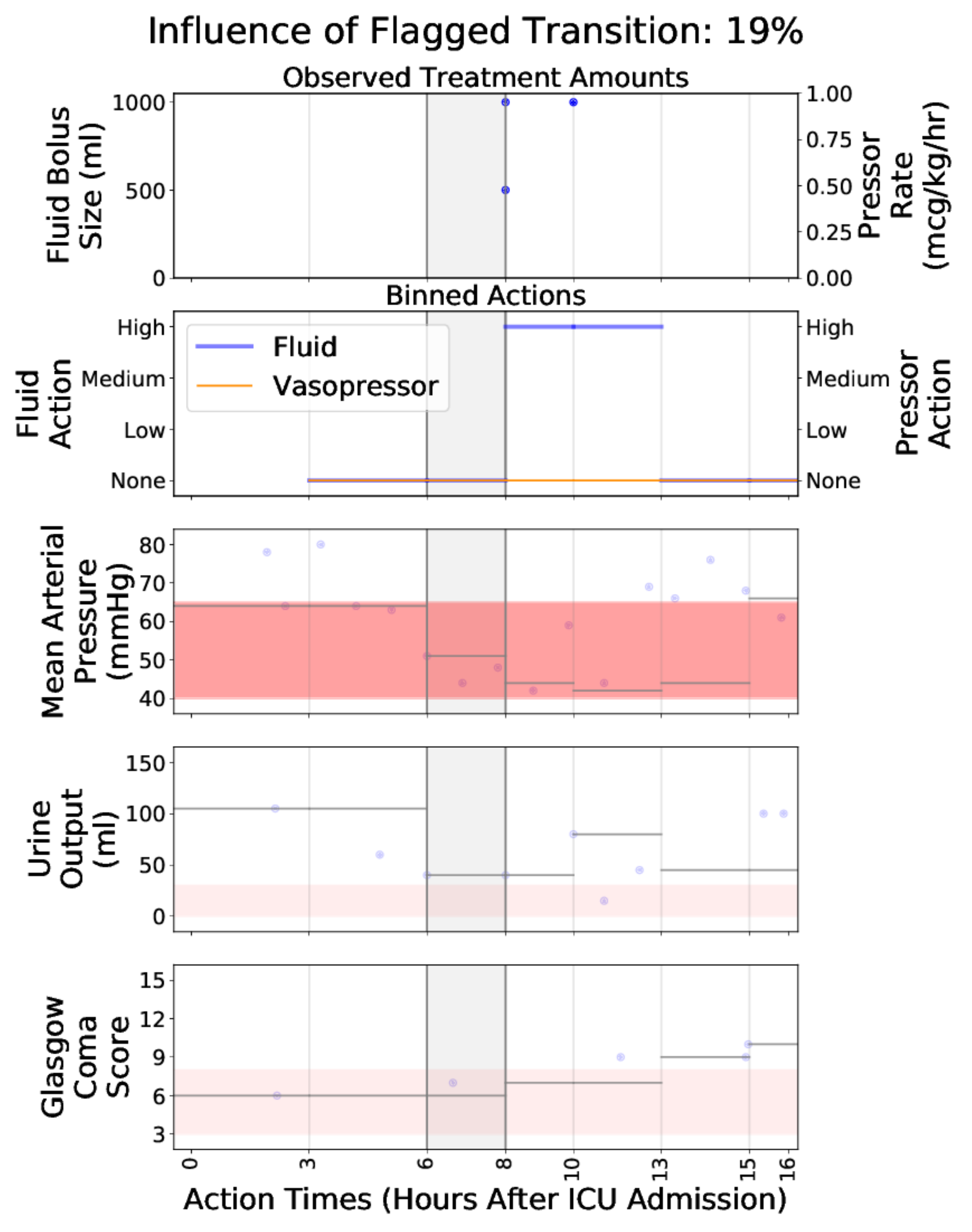}
\caption{An additional example identified by our influence analysis as having an especially high effect on the OPE value estimate.} 
\label{fig:mimic-1}
\end{figure}

\end{document}